\pdfoutput=1
\documentclass[10pt, logo, copyright]{nv}

\usepackage{booktabs} 
\usepackage{caption}  
\usepackage{xcolor}   
\usepackage{multirow}
\usepackage{makecell} 
\usepackage{url}
\usepackage{booktabs}
\usepackage{graphicx}
\usepackage{xspace}
\usepackage{natbib}
\usepackage{amsmath,amsfonts,bm}
\usepackage{amssymb}
\usepackage{amsthm}
\usepackage{algorithm}
\usepackage{algpseudocode}
\usepackage{pifont} 
\usepackage{tikz}
\usetikzlibrary{tikzmark,calc}
\usepackage{wrapfig}
\usepackage{subcaption}

\definecolor{cvprblue}{rgb}{0.21,0.49,0.74}
\definecolor{iccvblue}{rgb}{0.21,0.49,0.74}
\definecolor{mitblue}{rgb}{0.88,0.95,0.96}
\definecolor{gold}{rgb}{0.75,0.6,0.12}
\colorlet{shadecolor}{gray!40}
\definecolor{mydarkred}{rgb}{0.8,0.02,0.02}

\newcolumntype{g}{>{\columncolor{mitblue}}c}
\newcolumntype{f}{>{\columncolor{mitblue}}l}
\newcolumntype{h}{>{\columncolor{mitblue}}r}
\newcolumntype{i}{>{\columncolor{gray}}c}

\def\methodfullcap{Lightning On-Policy Distillation\xspace}

\def\methodterm{Lightning OPD\xspace}

\def\speedupsmall{3.6$\times$\xspace}
\def\speedup{4.0$\times$\xspace}

\theoremstyle{definition}  
\newtheorem{theorem}{Theorem}[section]
\newtheorem{proposition}[theorem]{Proposition}
\newtheorem{corollary}[theorem]{Corollary}

\newtheorem{assumption}[theorem]{Assumption}
\newtheorem{remark}[theorem]{Remark}

\usepackage[most]{tcolorbox}

\title{
 Lightning OPD: Efficient Post-Training for Large Reasoning Models with Offline On-Policy Distillation
}

\author{
Yecheng Wu, Song Han, Han Cai \\~\\
NVIDIA \\
\url{https://github.com/jet-ai-projects/Lightning-OPD}
}

\correspondingauthor{Han Cai (\texttt{hcai@nvidia.com}).}

\begin{abstract}
    \textbf{Abstract:} On-policy distillation (OPD) is an effective post-training paradigm for large language models but requires a live teacher server throughout training, resulting in substantial infrastructure overhead. We investigate whether OPD can be performed offline by precomputing teacher log-probabilities once over SFT rollouts and reusing them during training. We find that naively doing so fails to reliably match standard OPD, and trace the root cause to a previously overlooked condition we term \textbf{teacher consistency}, requiring that the same teacher be used for both supervised fine-tuning and OPD. Violating this condition introduces a gradient bias that degrades performance for both offline and online OPD. Building on this insight, we propose \textbf{Lightning OPD}, an offline on-policy distillation framework that enforces teacher consistency and eliminates the need for a live teacher server entirely. We prove that, under teacher consistency, Lightning OPD shares the same optimum as standard OPD, with bounded gradient discrepancy and an implicit regularization effect that helps prevent policy drift. Experiments on math reasoning and code generation show that Lightning OPD achieves comparable performance to standard OPD while delivering \textbf{\speedup} higher training efficiency. Starting from an SFT-initialized Qwen3-8B-Base model, Lightning OPD reaches \textbf{69.9\% on AIME 2024} in just \textbf{30 GPU hours}. Lightning OPD further scales to MoE architectures, training Qwen3-30B-A3B to \textbf{71.0\% on AIME 2024} on a \textbf{single 8$\times$H100 node}, substantially lowering the barrier for academic research on LLM post-training. Our code is released at \url{https://github.com/jet-ai-projects/Lightning-OPD}.
    \end{abstract}

\begin{document}
\maketitle

\begin{figure*}[ht]
\centering
\includegraphics[width=\linewidth]{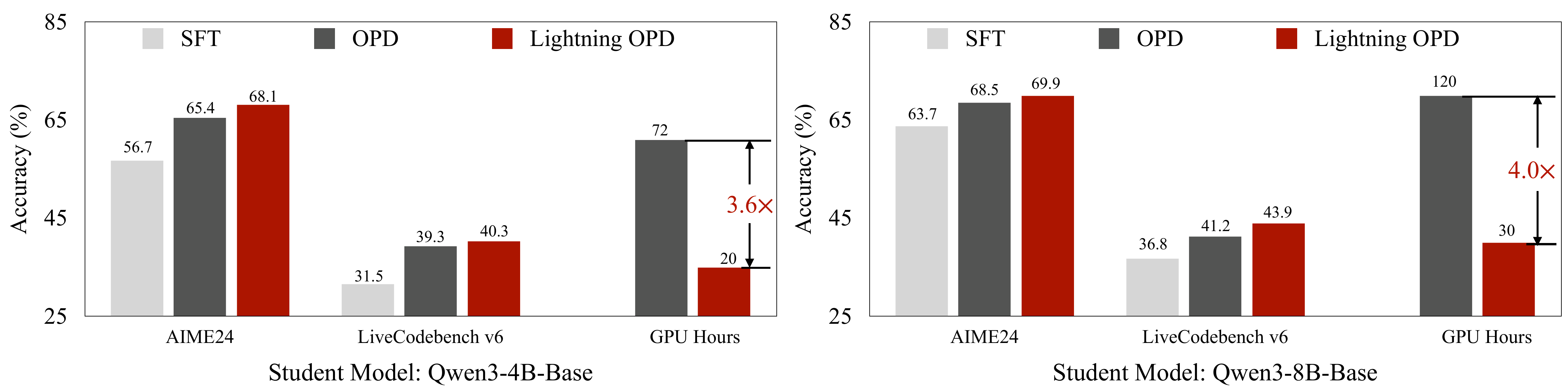}
\vspace{0.2cm}
\includegraphics[width=\linewidth]{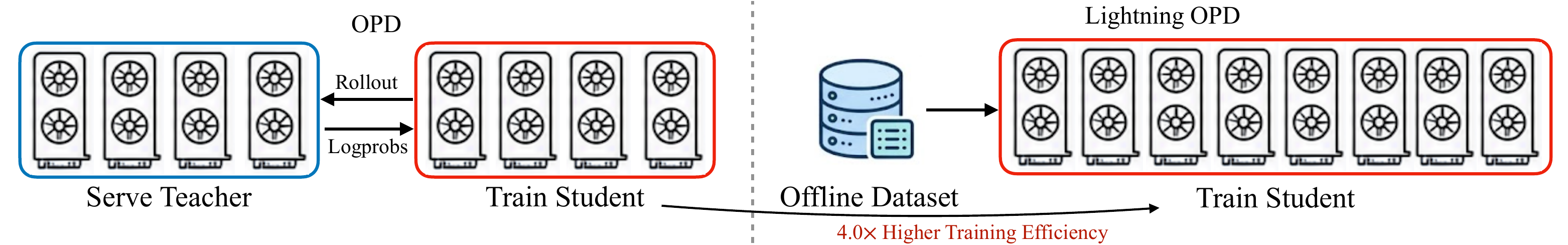}
\caption{(Top) Performance (Pass@1, \%) and training cost of \methodterm compared to standard OPD and the SFT baseline on Qwen3-4B-Base and Qwen3-8B-Base models. \methodterm achieves comparable performance to standard OPD on both math and coding benchmarks across both scales, while eliminating the need for a live teacher server during training. At the 8B scale, \methodterm achieves a state-of-the-art \textbf{69.9\%} on AIME 2024 in just \textbf{30 GPU hours}, delivering \textbf{\speedup{}} higher training efficiency than standard OPD. (Bottom) Intuitive comparison of GPU resource allocation. Standard OPD requires co-hosting the student and teacher, fragmenting GPU resources. \methodterm collects rollouts and teacher log-probabilities offline, dedicating all GPUs to student training.}
\label{fig:teaser}
\end{figure*}

\section{Introduction}
\label{sec:intro}

Large language models (LLMs) have achieved remarkable progress across tasks such as mathematical reasoning, code generation, and multi-step agent planning \cite{guo2025deepseek,singh2025openai,team2026kimi,nvidia_nemotron_3_2025,xiao2026mimo}. This success is underpinned by carefully designed post-training pipelines \cite{ouyang2022training}, which typically consist of supervised fine-tuning (SFT) on high-quality data \cite{guha2025openthoughts}, followed by a reinforcement learning (RL) stage to elicit stronger reasoning capabilities. On-Policy Distillation (OPD) \cite{agarwal2024policy,lu2025onpolicydistillation,yang2026learning,zhao2026self,shenfeld2026self,hubotter2026reinforcement,Nemotron_Cascade_2,song2026survey,chen2026rethinking,fu2026revisiting,wang2026entropy} has emerged as a particularly effective alternative to the RL stage. It trains a student model to match a stronger teacher’s token-level distribution using dense per-token advantage signals. Compared to Reinforcement Learning from Verifiable Rewards (RLVR) \cite{schulman2017proximal,shao2024deepseekmath,guo2025deepseek,yu2025dapo}, OPD provides richer supervision, offers greater training stability, and incurs significantly lower training costs, while achieving competitive or superior performance across a wide range of tasks \cite{lu2025onpolicydistillation,yang2025qwen3,yang2026learning,Nemotron_Cascade_2,zeng2026glm5,xiao2026mimo}.

However, standard OPD requires the teacher to score every student rollout during training, which introduces a persistent infrastructure bottleneck. A dedicated multi-GPU teacher server must run in parallel with the training job, leading to substantial compute overhead and making large-scale experiments costly and difficult to reproduce, particularly for academic researchers without access to extensive serving infrastructure.

A natural question is whether the benefits of on-policy supervision can be preserved while eliminating the need for a live teacher server. On-policy training is defined by the student’s current rollout distribution, which evolves at every gradient step, making the teacher appear indispensable. However, recent empirical studies suggest that RL-trained models remain surprisingly close to their SFT initialization: reasoning trajectories in RL models are largely a reweighted subset of those present in the SFT model~\cite{yue2025does}, and on-policy updates are inherently biased toward solutions that minimize KL divergence from the reference policy~\cite{shenfeld2025razor}. We observe a similar phenomenon in OPD training, where the student’s distribution exhibits only modest drift from the SFT reference throughout the OPD stage. This observation suggests a practical offline alternative \cite{rang2025revealing}: precompute the teacher’s log-probabilities once over SFT rollouts prior to training and reuse these values throughout the OPD process, thereby eliminating the need for a live teacher server.

In practice, however, naively applying this offline precomputation fails to reliably match the performance of standard OPD. Investigating the root cause, we find that the issue does not primarily stem from the offline approximation itself, but rather from a more fundamental condition that has been overlooked in prior OPD work, which we term \textbf{\emph{teacher consistency}}. Unlike RLVR, where model behavior is shaped solely by a reward signal, OPD involves \emph{two} distinct teachers: one used during the SFT stage to generate training trajectories, and another used during the OPD stage to provide the reference distribution. Teacher consistency requires these two teachers to be the same model. In practice, existing pipelines often violate this condition by following conventions inherited from RLVR, where SFT datasets are curated using whichever teacher produces the highest-quality demonstrations, without regard to the teacher used during OPD. For example, Thinking Machines Lab~\cite{lu2025onpolicydistillation} trains a Qwen3-8B-Base model on OpenThoughts-3~\cite{guha2025openthoughts}, whose trajectories are generated by QwQ-32B, while using Qwen3-32B as the OPD teacher, resulting in a mismatch that our analysis predicts to be detrimental. We show that such teacher inconsistency introduces a gradient bias that degrades performance for both offline and online OPD, with a more pronounced effect on the offline variant. These findings establish teacher consistency as an important design principle for any OPD pipeline, rather than specific to the offline setting.

\looseness=-1
With teacher consistency established as the key condition, we propose \textbf{Lightning OPD} (\textbf{L}ightning \textbf{O}n-\textbf{P}olicy \textbf{D}istillation), an offline distillation framework that naturally arises from enforcing this principle. In the SFT stage, the base model is fine-tuned on trajectories generated by a chosen teacher $\pi_T$ to obtain the reference policy $\pi_{\text{ref}}$. In the OPD stage, rollouts are sampled from $\pi_{\text{ref}}$, and the same teacher’s log-probabilities are precomputed once over these fixed responses, eliminating the need for a live teacher server during training. We provide a rigorous theoretical analysis showing that, under teacher consistency, Lightning OPD provably shares the same optimum as standard OPD. Moreover, the gradient discrepancy between the two remains bounded throughout training, and the offline objective introduces an implicit regularization effect that naturally prevents policy drift without requiring any explicit penalty.

We evaluate Lightning OPD on math and code reasoning tasks across diverse student–teacher model pairs, including Qwen3-4B with Qwen3-8B and Qwen3-8B with Qwen3-32B as the teacher \cite{yang2025qwen3}. Lightning OPD achieves comparable performance to standard OPD across all benchmarks while bringing significant training speedup by removing the need for parallel teacher serving infrastructure. Concretely, starting from an SFT-initialized Qwen3-8B-Base model, Lightning OPD reaches \textbf{69.9\% on AIME 2024} in just \textbf{30 GPU hours}, achieving \textbf{\speedup} higher training efficiency than standard OPD. We further demonstrate that Lightning OPD scales to Mixture-of-Experts (MoE) architectures, where standard OPD faces significant challenge due to the memory overhead of co-hosting both student and teacher models. On a single 8$\times$H100 node, Lightning OPD trains a Qwen3-30B-A3B model to \textbf{71.0\% on AIME 2024} and \textbf{60.8\% on LiveCodeBench v5}, significantly lowering the barrier for academic research on LLM post-training.

Our main contributions are summarized as follows:
\begin{itemize}
\item We identify \emph{teacher consistency} as an important design principle for effective OPD, requiring that SFT teacher and OPD teacher be the same model. We show that violating this condition introduces a gradient bias that degrades performance for both online and offline OPD.

\item We propose Lightning OPD, an offline on-policy distillation framework that enforces teacher consistency and eliminates the need for a live teacher server by precomputing teacher log-probabilities once over SFT rollouts. We prove that, under teacher consistency, Lightning OPD shares the same optimum as standard OPD, with bounded gradient discrepancy and an implicit regularization effect that naturally prevents policy drift.

\item We validate Lightning OPD on math and code reasoning tasks across diverse student–teacher pairs, from dense models (4B, 8B) to Mixture-of-Experts (30B-A3B). Starting from an SFT-initialized Qwen3-8B-Base model, Lightning OPD achieves \textbf{69.9\% on AIME 2024} in just \textbf{30 GPU hours}, delivering \textbf{\speedup} higher training efficiency than standard OPD. On a 30B MoE model, Lightning OPD reaches \textbf{71.0\% on AIME 2024} on a \textbf{single 8$\times$H100 node}, significantly lowering the barrier for academic research on LLM post-training.
\end{itemize}

\section{Related Work}
\label{sec:related}

\paragraph{LLM Post-Training.}
Post-training is now central to capable LLMs, typically combining supervised fine-tuning (SFT) on high-quality demonstrations \cite{ouyang2022training,guha2025openthoughts,wei2022chain} with reinforcement learning (RL) for further capability elicitation. Existing RL methods span outcome-based optimization against sparse verifiable rewards \cite{hu2025reinforcepp,ahmadian2024back,li2023remax,liu2025drgrpo,zheng2025gspo,minimax2025cispo,shao2024deepseekmath,yu2025dapo} and process-based optimization with dense intermediate supervision \cite{cui2025prime,yuan2025vcppo,yue2025vapo,kazemnejad2024vineppo,schulman2017proximal}, together driving recent reasoning advances \cite{guo2025deepseek,liu2024deepseekv3,team2025kimik15,singh2025openai,team2026kimi,nvidia_nemotron_3_2025,xiao2026mimo,hu2025openreasoner,yang2025qwen3,yang2024qwen25math}. Recent work increasingly studies the co-design of SFT and RL \cite{yan2025luffy,ma2026relift,zhang2025chord,chen2025bridge,liu2025uft,huang2025blend}. From the OPD perspective, our contribution is to show that SFT and OPD must share the same teacher, making teacher consistency a principled co-design constraint.

\paragraph{On-Policy Distillation.}
Knowledge distillation transfers capabilities from a large teacher to a smaller student by matching output distributions \cite{hinton2015distilling}. Standard offline KD uses fixed teacher-generated data \cite{kim2016sequence,gu2024minillm}, whereas on-policy distillation (OPD) aligns the student to the teacher on student-generated rollouts \cite{agarwal2024policy,lu2025onpolicydistillation}, often yielding stronger post-training gains \cite{yang2025qwen3,yang2026learning}. Recent OPD variants study reward extrapolation \cite{yang2026learning}, self-distillation \cite{zhao2026self,hubotter2026reinforcement,shenfeld2026self}, entropy-aware divergence \cite{wang2026entropy}, reinforcement-aware distillation \cite{xu2026reinforcement}, failure modes and recipes \cite{fu2026revisiting,chen2026rethinking}, controllable reasoning \cite{liang2026orbit}, black-box or privileged settings \cite{ye2025blackbox,penaloza2026privileged}, and broader post-training validation \cite{Nemotron_Cascade_2}. Our Lightning OPD removes the live teacher server while preserving dense per-token supervision, and highlights a design constraint largely ignored in prior OPD practice that the SFT-stage and OPD-stage teachers must be the same model.

\looseness=-1
\paragraph{Off-Policy Reinforcement Learning.}
Off-policy RL decouples data collection from optimization \cite{levine2020offline,williams1992reinforce}. This motivates offline RL algorithms \cite{kumar2020cql,kostrikov2021iql,fujimoto2019bcq}, their use in LLM alignment \cite{rafailov2023direct,yuan2023scaling,dong2023raft}, and recent off-policy reasoning methods \cite{schulman2017proximal,shao2024deepseekmath,ritter2026llms,lu2026pclreasoner}. Lightning OPD is similar only in optimizing over a fixed dataset but receives dense token-level teacher supervision rather than sparse rewards, so its central challenge is teacher consistency rather than OOD value estimation. We provide a more detailed comparison in Appendix~\ref{app:discussion}.

\section{Methodology}
\label{sec:method}

\subsection{Preliminaries}

Let $\pi_T$ denote a fixed teacher model and $\pi_\theta$ a trainable student. Given a prompt $q$, a response $x = (a_1, \ldots, a_T)$ is generated autoregressively as
\begin{equation}
    \pi_\theta(x \mid q) = \prod_{t=1}^{T} \pi_\theta(a_t \mid s_t),
\end{equation}
where $s_t = (q, a_1, \ldots, a_{t-1})$. Let $\pi_{\text{ref}}$ denote the SFT-initialized student. The per-token OPD advantage is
\begin{equation}
    A_t(\theta) = \log \pi_T(a_t \mid s_t) - \log \pi_\theta(a_t \mid s_t),
\end{equation}
which is positive where the teacher is more confident than the student and negative otherwise. Standard OPD~\cite{agarwal2024policy,lu2025onpolicydistillation,yang2025qwen3} optimizes
\begin{equation}
    J_{\text{on}}(\theta) = \mathbb{E}_{q \sim p,\; x \sim \pi_\theta}\!\left[\sum_{t=1}^{T} A_t(\theta)\right],
\end{equation}
while Lightning OPD fixes the rollout distribution to $\pi_{\text{ref}}$ and optimizes
\begin{equation}
    J_{\text{off}}(\theta) = \mathbb{E}_{q \sim p,\; x \sim \pi_{\text{ref}}}\!\left[\sum_{t=1}^{T} A_t(\theta)\right].
\end{equation}
Both objectives share the same advantage function and differ only in the response distribution. Following standard OPD practice~\cite{agarwal2024policy,lu2025onpolicydistillation,yang2025qwen3}, the advantage $A_t(\theta)$ is treated as a fixed scalar (stop-gradient) when computing parameter updates; throughout our analysis, $\nabla J_{\text{on}}$ and $\nabla J_{\text{off}}$ denote the resulting advantage-weighted policy gradients (formalized in Appendix~\ref{app:is_decomp}). Defining the per-trajectory gradient $f(x;\theta) := \sum_t A_t(\theta) \cdot \nabla \log \pi_\theta(a_t \mid s_t)$, an IS decomposition gives $\nabla J_{\text{on}}(\theta) = \mathbb{E}_{x \sim \pi_{\text{ref}}}[w(x;\theta) \cdot f(x;\theta)]$ where $w(x;\theta) = \pi_\theta(x)/\pi_{\text{ref}}(x)$, and the offline gradient $\nabla J_{\text{off}}(\theta) = \mathbb{E}_{x \sim \pi_{\text{ref}}}[f(x;\theta)]$ is the special case $w \equiv 1$.

\subsection{\methodfullcap}
\label{subsec:pipeline}

\begin{figure*}[t]
\centering
\includegraphics[width=\linewidth]{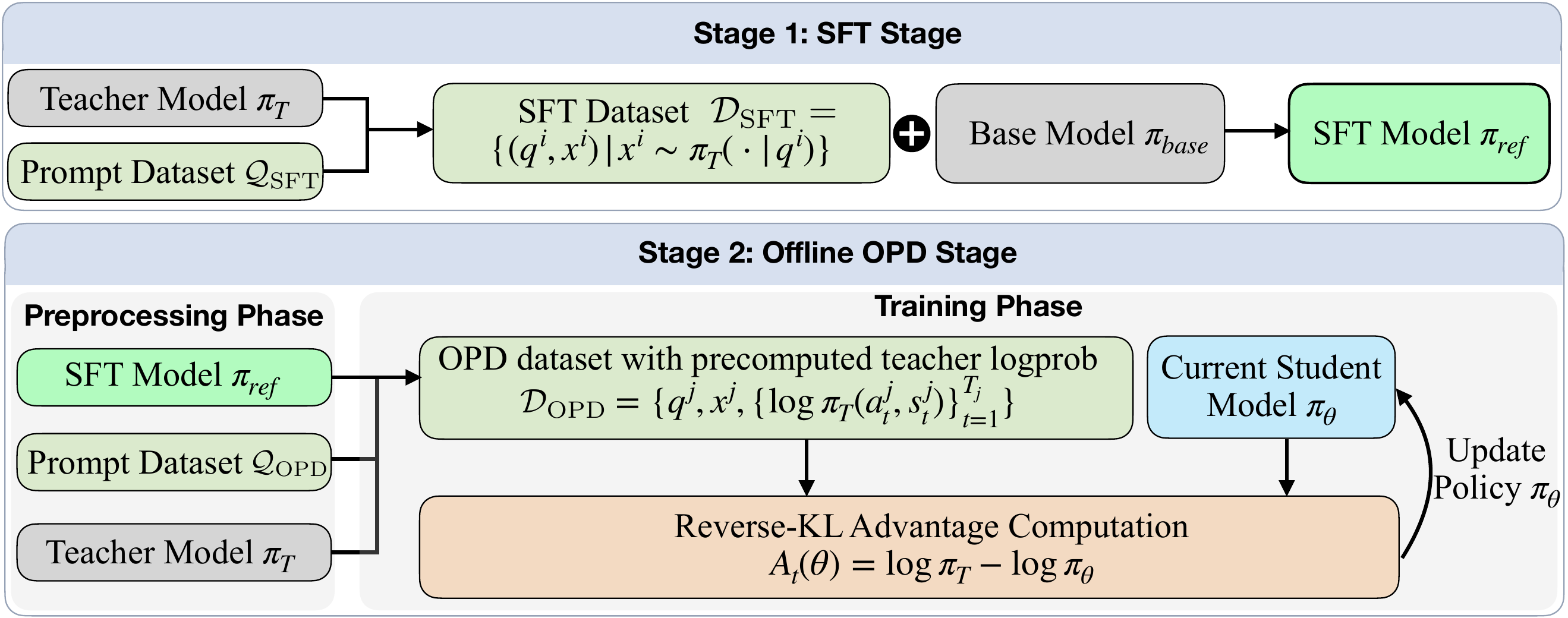}
\caption{
\looseness=-1
Overview of \methodterm. In the SFT stage, the teacher $\pi_T$ generates trajectories on $\mathcal{Q}_{\text{SFT}}$ then the base model $\pi_{\text{base}}$ is fine-tuned on these trajectories to obtain the SFT model $\pi_{\text{ref}}$. OPD stage proceeds in two phases. In the preprocessing phase, rollouts are sampled from $\pi_{\text{ref}}$ on $\mathcal{Q}_{\text{OPD}}$ and the teacher is queried once to compute and store per-token log-probabilities $\log \pi_T(a_t \mid s_t)$, forming the offline dataset $\mathcal{D}_{\text{OPD}}$. In the training phase, the student $\pi_\theta$ is initialized from $\pi_{\text{ref}}$ and trained on $\mathcal{D}_{\text{OPD}}$. At each step, the per-token advantage $A_t = \log \pi_T(a_t \mid s_t) - \log \pi_\theta(a_t \mid s_t)$ is computed by reading $\log \pi_T$ from $\mathcal{D}_{\text{OPD}}$ while computing $\log \pi_\theta$ online. During this process, we do not need a live teacher model. In contrast, standard OPD requires a live teacher server throughout training to compute advantages at every gradient step. By eliminating this requirement, \methodterm reduces total training cost of OPD by \textbf{\speedup} at the 8B scale (from 120 to 30 GPU hours).}
\label{fig:overview}
\end{figure*}

Following the common practice of LLM post-training~\cite{ouyang2022training,guo2025deepseek}, \methodterm consists of two stages. We describe each stage below and highlight where \methodterm departs from standard OPD.

\paragraph{Stage 1: Supervised Fine-Tuning.}
Given a base model $\pi_{\text{base}}$, a teacher $\pi_T$, and a prompt dataset $\mathcal{Q}_{\text{SFT}}$, we collect a supervised dataset from teacher-generated trajectories:
\begin{equation}
    \mathcal{D}_{\text{SFT}} = \bigl\{\,\bigl(q^i,\; x^i\bigr) \;\bigm|\; x^i \sim \pi_T(\cdot \mid q^i),\; q^i \in \mathcal{Q}_{\text{SFT}}\,\bigr\}.
\end{equation}
The base model is then fine-tuned on $\mathcal{D}_{\text{SFT}}$ via maximum likelihood estimation to obtain the reference policy $\pi_{\text{ref}}$:
\begin{equation}
    \pi_{\text{ref}} = \operatorname*{arg\,max}_\theta \;\mathbb{E}_{(q,x)\sim\mathcal{D}_{\text{SFT}}}\!\left[\sum_{t=1}^{T} \log \pi_\theta(a_t \mid s_t)\right].
\end{equation}
While standard OPD may use any high-quality SFT dataset regardless of its source, \methodterm requires that $\mathcal{D}_{\text{SFT}}$ be generated by the same teacher $\pi_T$ used in the OPD stage. This \emph{teacher consistency} condition is a prerequisite for the offline approximation to be sound, as we formally establish in Section~\ref{sec:method}.

\paragraph{Stage 2: Offline On-Policy Distillation.}
Starting from a task-specific prompt dataset $\mathcal{Q}_{\text{OPD}}$, the OPD stage proceeds in two phases. In the \textbf{preprocessing phase}, rollouts are sampled from $\pi_{\text{ref}}$ and the teacher is queried \emph{once} to precompute and store per-token log-probabilities, forming the offline dataset:
\begin{equation}
    \mathcal{D}_{\text{OPD}} = \Bigl\{\,\bigl(q^j,\; x^j,\; \{\log \pi_T(a_t^j \mid s_t^j)\}_{t=1}^{T_j}\bigr) \;\Bigm|\; x^j \sim \pi_{\text{ref}}(\cdot \mid q^j),\; q^j\in\mathcal{Q}_{\text{OPD}}\,\Bigr\}.
\end{equation}
In contrast, standard OPD requires a live teacher server throughout training, querying it at every step for log-probabilities on freshly sampled rollouts from the current policy $\pi_\theta$. In the \textbf{training phase}, the student is initialized from $\pi_{\text{ref}}$ and trained on $\mathcal{D}_{\text{OPD}}$ with no teacher server required. At each step, the advantage is computed as $A_t(\theta) = \log \pi_T(a_t \mid s_t) - \log \pi_\theta(a_t \mid s_t)$, where the teacher term is read directly from $\mathcal{D}_{\text{OPD}}$ and the student term is computed online. This eliminates the primary infrastructure bottleneck of standard OPD and makes high-quality on-policy distillation accessible with minimal compute overhead. The full procedure is given in Algorithm~\ref{alg:offline_opd}.

\begin{algorithm}[t]
\caption{Lightning On-Policy Distillation (Lightning OPD)}
\label{alg:offline_opd}
\begin{algorithmic}[1]
\Require Base model $\pi_{\text{base}}$, teacher $\pi_T$, prompt datasets $\mathcal{Q}_{\text{SFT}}$, $\mathcal{Q}_{\text{OPD}}$, learning rate $\eta$
\State \textbf{// Stage 1: SFT}
\State Collect $\mathcal{D}_{\text{SFT}} = \{(q, x) \mid q \in \mathcal{Q}_{\text{SFT}},\; x \sim \pi_T(\cdot \mid q)\}$
\State Fine-tune $\pi_{\text{base}}$ on $\mathcal{D}_{\text{SFT}}$ to obtain $\pi_{\text{ref}}$
\State \textbf{// Stage 2, Phase 1: Preprocessing}
\For{each prompt $q^j \in \mathcal{Q}_{\text{OPD}}$}
    \State Sample $x^j \sim \pi_{\text{ref}}(\cdot \mid q^j)$; store $\log \pi_T(a_t^j \mid s_t^j)$ for all $t$
\EndFor
\State Form $\mathcal{D}_{\text{OPD}} = \{(q^j, x^j, \{\log \pi_T(a_t^j \mid s_t^j)\}_{t=1}^{T_j})\}$
\State \textbf{// Stage 2, Phase 2: Training}
\State Initialize $\pi_\theta \leftarrow \pi_{\text{ref}}$
\For{each training step}
    \State Sample mini-batch from $\mathcal{D}_{\text{OPD}}$
    \State Compute $A_t(\theta) = \log \pi_T(a_t \mid s_t) - \log \pi_\theta(a_t \mid s_t)$, clip to $[-\tau, \tau]$
    \State Update $\theta \leftarrow \theta + \eta \cdot \nabla J_{\text{off}}(\theta)$
\EndFor
\end{algorithmic}
\end{algorithm}

\subsection{Theoretical Analysis}
\label{subsec:theory}

All proofs are deferred to Appendix~\ref{app:proofs}. The analysis rests on three standard assumptions, with a fourth for teacher-mismatch analysis.

\begin{assumption}[Bounded Absolute Advantage]
\label{asm:second_moment}
There exists $\sigma_A < \infty$ such that for all $\theta \in \Theta$: \\
$\mathbb{E}_{x \sim \pi_{\text{ref}}}\!\left[\bigl(\sum_{t=1}^{T} |A_t(\theta)|\bigr)^{2}\right] \leq \sigma_A^2$,
where $A_t(\theta) = \log \pi_T(a_t \mid s_t) - \log \pi_\theta(a_t \mid s_t)$.
\end{assumption}

\begin{assumption}[Support Coverage]
\label{asm:support}
For all $\theta$ encountered during optimization and all prompts $q$: $\operatorname{supp}(\pi_\theta(\cdot \mid q)) \subseteq \operatorname{supp}(\pi_{\text{ref}}(\cdot \mid q))$.
\end{assumption}

\begin{assumption}[Bounded Score Function]
\label{asm:score}
There exists $G < \infty$ such that $\|\nabla \log \pi_\theta(a_t \mid s_t)\|_2 \leq G$ for all $\theta \in \Theta$ and all $t$.
\end{assumption}

\begin{assumption}[Bounded Teacher Mismatch]
\label{asm:mismatch}
Let $\Delta_t = \log \pi_T^{\text{SFT}}(a_t \mid s_t) - \log \pi_T^{\text{OPD}}(a_t \mid s_t)$. There exists $\sigma_\Delta < \infty$ such that
$\mathbb{E}_{x \sim \pi_{\text{ref}}}\!\left[\bigl(\sum_{t=1}^{T} |\Delta_t|\bigr)^{2}\right] \leq \sigma_\Delta^2$.
When teacher consistency holds ($\pi_T^{\text{SFT}} = \pi_T^{\text{OPD}}$), $\Delta_t = 0$ everywhere and $\sigma_\Delta = 0$.
\end{assumption}

Assumptions~\ref{asm:second_moment}--\ref{asm:score} are standard: \ref{asm:second_moment} is an $L^2$ condition on absolute advantages, automatically satisfied under advantage clipping ($|A_t| \leq \tau$ gives $\sigma_A \leq T\tau$); \ref{asm:support} holds naturally when $\pi_\theta$ is initialized from $\pi_{\text{ref}}$; and \ref{asm:score} is standard in policy gradient analyses. Assumption~\ref{asm:mismatch} quantifies the teacher mismatch and is only needed for the teacher consistency results (Theorems~\ref{thm:consistency_gap}--\ref{thm:consistency_online}).

\begin{theorem}[Gradient Discrepancy Bound]
\label{thm:gap}
Under Assumptions~\ref{asm:second_moment}--\ref{asm:score},
\begin{equation}
    \|\nabla J_{\text{on}}(\theta) - \nabla J_{\text{off}}(\theta)\|_2 \leq G \cdot \sigma_A \cdot \sqrt{\chi^2(\pi_\theta \,\|\, \pi_{\text{ref}})},
\end{equation}
where $\chi^2(\pi_\theta \| \pi_{\text{ref}}) = \mathbb{E}_{x \sim \pi_{\text{ref}}}[w(x;\theta)^2] - 1$.
\end{theorem}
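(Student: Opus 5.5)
The plan is to start from the importance-sampling representation given in the Preliminaries. By Assumption~\ref{asm:support}, $w(x;\theta)=\pi_\theta(x)/\pi_{\text{ref}}(x)$ is well defined wherever $\pi_\theta$ puts mass, so the change of measure is legitimate and
\begin{equation*}
\nabla J_{\text{on}}(\theta)-\nabla J_{\text{off}}(\theta)=\mathbb{E}_{x\sim\pi_{\text{ref}}}\bigl[(w(x;\theta)-1)\,f(x;\theta)\bigr].
\end{equation*}
Here $f(x;\theta)=\sum_t A_t(\theta)\nabla\log\pi_\theta(a_t\mid s_t)$, with the advantage treated as a stop-gradient scalar. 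I will check in Appendix~\ref{app:is_decomp} that both gradients really take this form with the same $f$. In particular, the stop-gradient convention means no $\nabla A_t$ terms appear. The score-function identity then gives the on-policy gradient as $\mathbb{E}_{\pi_\theta}[f]$, which is rewritten under $\pi_{\text{ref}}$ by multiplying by $w$.

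Next I bound the norm. Jensen's inequality (convexity of the norm) moves it inside the expectation, and the triangle inequality with Assumption~\ref{asm:score} gives the pointwise bound
\begin{equation*}
\|f(x;\theta)\|_2\le\sum_t|A_t(\theta)|\,\|\nabla\log\pi_\theta(a_t\mid s_t)\|_2\le G\sum_t|A_t(\theta)|.
\end{equation*}
Hence
\begin{equation*}
\|\nabla J_{\text{on}}-\nabla J_{\text{off}}\|_2\le G\,\mathbb{E}_{\pi_{\text{ref}}}\Bigl[|w-1|\sum_t|A_t|\Bigr].
\end{equation*}
Cauchy--Schwarz under $\pi_{\text{ref}}$ then splits this into $\sqrt{\mathbb{E}_{\pi_{\text{ref}}}[(w-1)^2]}\cdot\sqrt{\mathbb{E}_{\pi_{\text{ref}}}[(\sum_t|A_t|)^2]}$. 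The second factor is at most $\sigma_A$ by Assumption~\ref{asm:second_moment}.

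It remains to identify the first factor with $\chi^2$. Expanding gives $\mathbb{E}_{\pi_{\text{ref}}}[(w-1)^2]=\mathbb{E}[w^2]-2\mathbb{E}[w]+1$. Support coverage makes $\mathbb{E}_{\pi_{\text{ref}}}[w]=\sum_{x\in\operatorname{supp}\pi_\theta}\pi_\theta(x)=1$, so this equals $\mathbb{E}[w^2]-1=\chi^2(\pi_\theta\|\pi_{\text{ref}})$, which finishes the proof.

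The only delicate step is the first one. It requires a precise statement that both gradients use the same stop-gradient integrand $f$, and that the support condition makes $\mathbb{E}_{\pi_{\text{ref}}}[w]=1$ exact, with no mass of $\pi_\theta$ lost outside $\operatorname{supp}\pi_{\text{ref}}$. Interchanging gradient and expectation is unproblematic over a finite response space. Everything after that is routine Jensen and Cauchy--Schwarz.
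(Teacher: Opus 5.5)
Your proof is correct and follows the paper's argument. The IS identity reduces the gap to $\mathbb{E}_{\pi_{\text{ref}}}[(w-1)f]$; Cauchy--Schwarz splits this into $\sqrt{\mathbb{E}_{\pi_{\text{ref}}}[(w-1)^2]}$ times a factor bounded by $G\sigma_A$ via the triangle inequality and Assumptions~\ref{asm:second_moment} and~\ref{asm:score}; and $\mathbb{E}_{\pi_{\text{ref}}}[w]=1$ identifies the first factor with $\chi^2$. The only cosmetic difference is that you pull the norm inside with Jensen before a scalar Cauchy--Schwarz, whereas the paper applies Cauchy--Schwarz to the vector-valued expectation and then bounds $\mathbb{E}_{\pi_{\text{ref}}}\|f\|_2^2 \le G^2\sigma_A^2$.
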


At initialization $\pi_\theta = \pi_{\text{ref}}$, $\chi^2 = 0$ and the two gradients coincide exactly; the bound grows with drift but remains small under standard KL regularization. When the teacher is representable, the two methods share a common fixed point:

\begin{theorem}[Shared Fixed Point]
\label{thm:fixed_point}
The online objective satisfies $J_{\text{on}}(\theta) = -\mathrm{KL}(\pi_\theta \,\|\, \pi_T) \leq 0$, with global maximum at $\theta^* \in \arg\min_{\theta \in \Theta} \mathrm{KL}(\pi_\theta \,\|\, \pi_T)$. When $\pi_T \in \Pi_\Theta$, $A_t(\theta^*) = 0$ almost surely, and $\theta^*$ is a shared zero of both the online and offline OPD updates.
\end{theorem}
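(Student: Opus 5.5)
The plan has three parts. First I compute $J_{\text{on}}$ directly, then use Gibbs' inequality to locate its maximizer, and finally show that at that maximizer the advantages vanish, which kills both update directions.

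\textbf{Step 1: identity for $J_{\text{on}}$.} Fix a prompt $q$. The sum of advantages telescopes over the autoregressive factorization:
\begin{equation*}
\sum_{t=1}^{T} A_t(\theta) = \log \frac{\pi_T(x\mid q)}{\pi_\theta(x\mid q)}.
\end{equation*}
Taking the expectation over $x\sim\pi_\theta(\cdot\mid q)$ gives $-\mathrm{KL}(\pi_\theta(\cdot\mid q)\,\|\,\pi_T(\cdot\mid q))$. Averaging over $q\sim p$ then yields $J_{\text{on}}(\theta) = -\mathrm{KL}(\pi_\theta\|\pi_T)$, read as the prompt-averaged sequence-level KL. Two points need care here:
\begin{itemize}
\item the sequence length $T$ is random, so I include the EOS token in the product and work with complete sequences;
\item if $\pi_\theta$ puts mass where $\pi_T$ vanishes, both sides equal $-\infty$.
\end{itemize}
Nonnegativity of KL (Gibbs' inequality) gives $J_{\text{on}}\le 0$. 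Hence the global maximizers of $J_{\text{on}}$ over $\Theta$ are exactly $\arg\min_\theta \mathrm{KL}(\pi_\theta\|\pi_T)$.

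\textbf{Step 2: realizable case.} Suppose $\pi_T\in\Pi_\Theta$, so some $\theta_0$ has $\pi_{\theta_0}=\pi_T$ and KL equal to $0$. Then every minimizer $\theta^*$ has $\mathrm{KL}(\pi_{\theta^*}\|\pi_T)=0$, which forces $\pi_{\theta^*}(\cdot\mid q)=\pi_T(\cdot\mid q)$ for $p$-a.e. $q$. Equality of the sequence distributions implies equality of every conditional $\pi_{\theta^*}(a\mid s)=\pi_T(a\mid s)$ on reachable prefixes, since each conditional is a ratio of prefix marginals. Therefore $A_t(\theta^*)=0$ for every $t$, almost surely under $\pi_{\theta^*}=\pi_T$.

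\textbf{Step 3: both updates vanish.} The main obstacle is the phrase ``almost surely'', because the offline update samples from $\pi_{\text{ref}}$ rather than $\pi_{\theta^*}$.
\begin{itemize}
\item \emph{Online update.} The per-trajectory gradient $f(x;\theta^*)=\sum_t A_t(\theta^*)\nabla\log\pi_{\theta^*}(a_t\mid s_t)$ vanishes $\pi_{\theta^*}$-a.s. Hence $\nabla J_{\text{on}}(\theta^*)=\mathbb{E}_{\pi_{\theta^*}}[f]=0$.
\item \emph{Offline update.} I need $A_t(\theta^*)=0$ on prefixes reachable under $\pi_{\text{ref}}$. This holds because the identity $\pi_{\theta^*}(\cdot\mid s)=\pi_T(\cdot\mid s)$ is a statement about the conditionals themselves, so it applies to any prefix $s$ where both conditionals are defined, not only to prefixes reachable under $\pi_T$.
\end{itemize}
Formally, realizability means equality as autoregressive models, and I would state that interpretation explicitly. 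For prefixes reachable only under $\pi_{\text{ref}}$, I would either adopt it as the meaning of $\pi_T\in\Pi_\Theta$, or invoke the support relation from teacher consistency: $\pi_{\text{ref}}$ is fit to $\pi_T$ data, so it is natural to take $\mathrm{supp}\,\pi_{\text{ref}}\subseteq\mathrm{supp}\,\pi_T$.

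With this, $f(x;\theta^*)=0$ for $\pi_{\text{ref}}$-a.e. $x$, so $\nabla J_{\text{off}}(\theta^*)=0$. As a consistency check, Theorem~\ref{thm:gap} is compatible with this conclusion, but it is not needed. Clipping to $[-\tau,\tau]$ preserves zeros, so the claim also holds for the clipped updates of Algorithm~\ref{alg:offline_opd}.
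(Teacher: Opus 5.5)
Your proposal is correct and follows the paper's proof. In both, the per-token advantages telescope into $\log(\pi_T(x\mid q)/\pi_\theta(x\mid q))$, giving $J_{\text{on}}=-\mathrm{KL}(\pi_\theta\|\pi_T)\le 0$; in the realizable case $\pi_{\theta^*}=\pi_T$ gives $A_t(\theta^*)=0$, so both stop-gradient updates vanish.

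Your extra care about prefixes reachable under $\pi_{\text{ref}}$ but not under $\pi_T$ addresses a point the paper glosses over when it simply asserts $A_t(\theta^*)=0$ for all $(q,x)$. Note, though, that reading realizability as equality of autoregressive models only certifies the realizing parameter itself. To cover every minimizer you need the support-containment route, which holds automatically when $\pi_T$ has full softmax support.
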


When $\pi_T \notin \Pi_\Theta$, Theorem~\ref{thm:gap} ensures the offline update stays close to the online one as long as drift remains small, suggesting comparable capacity-limited behavior in practice. Moreover, the two updates are related by a covariance correction:

\begin{theorem}[Gradient Decomposition]
\label{thm:implicit_reg}
The offline gradient decomposes as
$\nabla J_{\text{off}}(\theta) = \nabla J_{\text{on}}(\theta) - \mathrm{Cov}_{\pi_{\text{ref}}}\!\left[w(x;\theta),\; f(x;\theta)\right]$,
where $f(x;\theta) = \sum_t A_t(\theta) \cdot \nabla \log \pi_\theta(a_t \mid s_t)$.
\end{theorem}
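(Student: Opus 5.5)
The identity follows from the IS decomposition recorded in the Preliminaries together with the elementary fact $\mathbb{E}_{\pi_{\text{ref}}}[w(x;\theta)] = 1$.

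First, formalize both gradients with the advantage held as a stop-gradient scalar. Write $\nabla J_{\text{on}}(\theta) = \mathbb{E}_{x\sim\pi_\theta}[f(x;\theta)]$. By Assumption~\ref{asm:support}, every $x$ with $\pi_\theta(x)>0$ also has $\pi_{\text{ref}}(x)>0$, so a change of measure gives
\[
\nabla J_{\text{on}}(\theta) = \mathbb{E}_{x\sim\pi_{\text{ref}}}[w(x;\theta) f(x;\theta)],
\]
with $w = \pi_\theta/\pi_{\text{ref}}$. By definition, $\nabla J_{\text{off}}(\theta) = \mathbb{E}_{\pi_{\text{ref}}}[f(x;\theta)]$. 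All expectations are conditioned on $q$ and then averaged over $q\sim p$; since the identity is linear in these expectations, it suffices to prove it for each fixed $q$.

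Second, use the normalization of $w$:
\[
\mathbb{E}_{\pi_{\text{ref}}}[w] = \sum_{x\in\operatorname{supp}\pi_{\text{ref}}} \pi_\theta(x) = 1,
\]
where the second equality again uses Assumption~\ref{asm:support}. The covariance, taken componentwise for the vector-valued $f$, then reads
\[
\mathrm{Cov}_{\pi_{\text{ref}}}[w,f] = \mathbb{E}_{\pi_{\text{ref}}}[wf] - \mathbb{E}_{\pi_{\text{ref}}}[w]\,\mathbb{E}_{\pi_{\text{ref}}}[f] = \nabla J_{\text{on}}(\theta) - \nabla J_{\text{off}}(\theta).
\]
Rearranging gives exactly the claimed decomposition.

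The only real obstacle is checking that every expectation is finite, so that the covariance is well defined and the manipulations are legitimate. By Assumption~\ref{asm:score}, $\|f\| \le G\sum_t |A_t|$. Assumption~\ref{asm:second_moment} then gives $f \in L^2(\pi_{\text{ref}})$, so in particular $\mathbb{E}_{\pi_{\text{ref}}}[f]$ is finite. For $\mathbb{E}_{\pi_{\text{ref}}}[wf]$, note that $\mathbb{E}_{\pi_{\text{ref}}}[w\|f\|] = \mathbb{E}_{\pi_\theta}[\|f\|]$. This is finite either directly under the same advantage bounds or, as in Theorem~\ref{thm:gap}, via Cauchy--Schwarz whenever $\chi^2(\pi_\theta\|\pi_{\text{ref}})<\infty$. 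When responses have bounded length and advantages are clipped, as in Algorithm~\ref{alg:offline_opd}, all of these quantities are trivially finite.

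Finally, the identity can be read as follows. The offline update equals the online one minus the correction term $\mathrm{Cov}_{\pi_{\text{ref}}}[w,f]$, which vanishes at initialization because $w \equiv 1$ there. This also links the result back to Theorem~\ref{thm:gap}: applying Cauchy--Schwarz to the covariance, using $\mathrm{Var}(w) = \chi^2(\pi_\theta\|\pi_{\text{ref}})$, recovers that bound.
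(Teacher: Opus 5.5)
Your proposal is correct and is essentially the paper's argument: you write $\nabla J_{\text{on}} = \mathbb{E}_{\pi_{\text{ref}}}[w f]$ via the IS decomposition, substitute $\mathbb{E}_{\pi_{\text{ref}}}[w]=1$ into $\mathbb{E}[wf] = \mathbb{E}[w]\,\mathbb{E}[f] + \mathrm{Cov}[w,f]$, and rearrange. The integrability check and the per-prompt reduction are harmless additions the paper omits; the per-prompt reduction is valid because $\mathbb{E}[w \mid q]=1$, so the pooled covariance equals the average of the conditional ones.
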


The covariance term vanishes at initialization ($w \equiv 1$) and grows with drift, empirically acting as a trust-region effect that stabilizes training without an explicit KL penalty (Figure~\ref{fig:dyn_iw}). The three results above assume the SFT and OPD stages use the same teacher. We formalize this as \emph{teacher consistency} and show that violating it introduces a bias in both paradigms:

\begin{theorem}[Teacher Consistency and the Offline-Online Gap]
\label{thm:consistency_gap}
Let $\pi_T^{\text{SFT}}$ and $\pi_T^{\text{OPD}}$ denote the teachers used in the SFT and OPD stages. Under Assumptions~\ref{asm:second_moment}--\ref{asm:mismatch},
\begin{equation}
    \|\nabla J_{\text{on}}(\theta) - \nabla J_{\text{off}}(\theta)\|_2 \leq G \cdot \!\left(\sigma_A + \sigma_\Delta\right) \cdot \sqrt{\chi^2(\pi_\theta \,\|\, \pi_{\text{ref}})}.
\end{equation}
Additionally, when $\sigma_\Delta > 0$, the mismatched offline gradient can carry a persistent bias bounded by $\|\nabla J_{\text{off}}(\theta) - \nabla J_{\text{off}}^{\delta=0}(\theta)\|_2 \leq G\,\sigma_\Delta$ relative to the consistent offline gradient, independent of $\chi^2$.
\end{theorem}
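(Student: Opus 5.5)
We follow the route of Theorem~\ref{thm:gap}: write both gradients against the common sampling distribution $\pi_{\text{ref}}$, isolate the teacher-mismatch term, and apply Cauchy--Schwarz to each piece separately. First we fix the mismatch model. Under teacher inconsistency, $\pi_{\text{ref}}$ is fit to $\pi_T^{\text{SFT}}$, so the advantage relevant to the consistent reference is $A_t^{\delta=0}(\theta) = \log \pi_T^{\text{SFT}}(a_t\mid s_t) - \log\pi_\theta(a_t\mid s_t)$. The OPD stage instead uses
\[
A_t(\theta) = \log \pi_T^{\text{OPD}}(a_t\mid s_t) - \log\pi_\theta(a_t\mid s_t) = A_t^{\delta=0}(\theta) - \Delta_t .
\]
Accordingly, the per-trajectory gradient splits as $f(x;\theta) = f^{\delta=0}(x;\theta) - g(x;\theta)$, where $g(x;\theta) := \sum_t \Delta_t \nabla\log\pi_\theta(a_t\mid s_t)$. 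By Assumption~\ref{asm:score}, $\|g(x;\theta)\|_2 \le G\sum_t|\Delta_t|$. Likewise, $\|f^{\delta=0}(x;\theta)\|_2 \le G\sum_t |A^{\delta=0}_t|$, and Assumption~\ref{asm:second_moment} will be read as controlling the second moment of this consistent-advantage sum by $\sigma_A^2$.

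\textbf{Step 1 (persistent bias).} The offline gradient is $\nabla J_{\text{off}} = \mathbb{E}_{\pi_{\text{ref}}}[f]$, and its consistent counterpart is $\nabla J^{\delta=0}_{\text{off}} = \mathbb{E}_{\pi_{\text{ref}}}[f^{\delta=0}]$. Their difference is $-\mathbb{E}_{\pi_{\text{ref}}}[g]$. Jensen's inequality followed by Cauchy--Schwarz then gives
\[
\|\mathbb{E}[g]\|_2 \le G\,\mathbb{E}\bigl[\textstyle\sum_t|\Delta_t|\bigr] \le G\sqrt{\mathbb{E}\bigl[(\textstyle\sum_t|\Delta_t|)^2\bigr]} \le G\sigma_\Delta .
\]
This bound involves no importance weight, so it is independent of $\chi^2$, as claimed.

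\textbf{Step 2 (offline--online gap).} By the IS decomposition, and using that $\mathbb{E}_{\pi_{\text{ref}}}[w]=1$ (which is where Assumption~\ref{asm:support} enters),
\[
\nabla J_{\text{on}} - \nabla J_{\text{off}} = \mathbb{E}_{\pi_{\text{ref}}}\bigl[(w-1)f\bigr] = \mathbb{E}_{\pi_{\text{ref}}}\bigl[(w-1)f^{\delta=0}\bigr] - \mathbb{E}_{\pi_{\text{ref}}}\bigl[(w-1)g\bigr].
\]
We bound each term by the triangle inequality and Cauchy--Schwarz, as in Theorem~\ref{thm:gap}: $\|\mathbb{E}[(w-1)h]\|_2 \le \sqrt{\mathbb{E}[(w-1)^2]}\,\sqrt{\mathbb{E}\|h\|_2^2}$. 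Since $\mathbb{E}[(w-1)^2] = \chi^2(\pi_\theta\|\pi_{\text{ref}})$, the first term is at most $G\sigma_A\sqrt{\chi^2}$ and the second at most $G\sigma_\Delta\sqrt{\chi^2}$. Summing yields the claimed $G(\sigma_A+\sigma_\Delta)\sqrt{\chi^2}$. As a sanity check, setting $\sigma_\Delta = 0$ recovers Theorem~\ref{thm:gap}.

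\textbf{Main obstacle.} The difficulty is bookkeeping rather than analysis: we must state precisely which advantage $\sigma_A$ controls, and make sure the same $A_t$ and $f$ appear in both $J_{\text{on}}$ and $J_{\text{off}}$. Otherwise the decomposition into a $\sigma_A$ part and a $\sigma_\Delta$ part is not canonical. Our first choice is to take $\sigma_A$ as bounding the consistent advantage $A^{\delta=0}_t$, which produces the clean additive form.

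If instead Assumption~\ref{asm:second_moment} is read literally for the OPD-teacher advantage, Theorem~\ref{thm:gap} already gives the sharper bound $G\sigma_A\sqrt{\chi^2}$. The stated inequality then follows trivially, because $\sigma_\Delta \ge 0$.
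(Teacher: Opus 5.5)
Your proposal is correct and follows essentially the same route as the paper's proof. The paper also splits $A_t = A_t^{\text{cons}} - \Delta_t$ and bounds the two pieces of $\mathbb{E}_{\pi_{\text{ref}}}[(w-1)f]$ separately by triangle inequality and Cauchy--Schwarz, then bounds the offline bias $\|\mathbb{E}_{\pi_{\text{ref}}}[f_\Delta]\|_2 \le G\sigma_\Delta$ by Jensen; like you, it implicitly reads $\sigma_A$ as controlling the consistent advantage in its Term (I).
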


\begin{theorem}[Teacher Consistency and Standard OPD]
\label{thm:consistency_online}
Let $\nabla J_{\text{on}}^{\delta=0}(\theta)$ denote the standard OPD gradient under a consistent teacher ($\pi_T^{\text{SFT}} = \pi_T^{\text{OPD}}$). At initialization $\theta = \theta_{\text{ref}}$,
\begin{equation}
    \|\nabla J_{\text{on}}(\theta_{\text{ref}}) - \nabla J_{\text{on}}^{\delta=0}(\theta_{\text{ref}})\|_2 \leq G \cdot \sigma_\Delta.
\end{equation}
\end{theorem}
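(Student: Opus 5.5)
The plan is to reduce the statement to a pointwise comparison of advantages, exploiting the fact that at $\theta = \theta_{\text{ref}}$ the online rollout distribution coincides with $\pi_{\text{ref}}$. First I will fix the interpretation of $\nabla J_{\text{on}}^{\delta=0}$, which I expect to be the main obstacle. It should be the advantage-weighted policy gradient (advantage under stop-gradient, as in Appendix~\ref{app:is_decomp}) computed with the same student parameters and the same rollout distribution. The only change is that the OPD-stage teacher $\pi_T^{\text{OPD}}$ is replaced by the SFT-stage teacher $\pi_T^{\text{SFT}}$ in the advantage. Writing $A_t^{\delta=0}(\theta) = \log \pi_T^{\text{SFT}}(a_t\mid s_t) - \log \pi_\theta(a_t\mid s_t)$, we get
\[
A_t(\theta) - A_t^{\delta=0}(\theta) = \log \pi_T^{\text{OPD}}(a_t\mid s_t) - \log \pi_T^{\text{SFT}}(a_t\mid s_t) = -\Delta_t .
\]
This holds for every $\theta$, because the student term cancels. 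Since the advantage is a stop-gradient scalar, only the coefficient multiplying $\nabla \log \pi_\theta(a_t\mid s_t)$ changes between the two gradients.

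Next, I will use the IS decomposition from the Preliminaries, $\nabla J_{\text{on}}(\theta) = \mathbb{E}_{x\sim\pi_{\text{ref}}}[w(x;\theta) f(x;\theta)]$, together with $w \equiv 1$ at $\theta = \theta_{\text{ref}}$. Evaluating both online gradients there gives
\[
\nabla J_{\text{on}}(\theta_{\text{ref}}) - \nabla J_{\text{on}}^{\delta=0}(\theta_{\text{ref}}) = -\,\mathbb{E}_{x\sim\pi_{\text{ref}}}\Bigl[\sum_{t=1}^{T} \Delta_t \, \nabla \log \pi_{\theta_{\text{ref}}}(a_t\mid s_t)\Bigr].
\]
Equivalently, one can sample directly from $\pi_{\theta_{\text{ref}}} = \pi_{\text{ref}}$ without invoking the IS decomposition at all. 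Either way, no $\chi^2$ term appears, which is exactly why the bound is stated only at initialization.

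Finally, I will bound the norm of this difference. Moving the norm inside the expectation, applying the triangle inequality over $t$, and using Assumption~\ref{asm:score} gives a bound of $G\,\mathbb{E}_{\pi_{\text{ref}}}\bigl[\sum_t |\Delta_t|\bigr]$. Jensen's (or Cauchy--Schwarz) inequality then gives $\mathbb{E}\bigl[\sum_t|\Delta_t|\bigr] \le \bigl(\mathbb{E}\bigl[(\sum_t|\Delta_t|)^2\bigr]\bigr)^{1/2} \le \sigma_\Delta$ by Assumption~\ref{asm:mismatch}, which yields $G\sigma_\Delta$.

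I expect the only genuinely delicate point to be the first step, namely justifying that the comparison holds $\pi_{\text{ref}}$ (and hence the rollout distribution) fixed. The concern is that changing the SFT teacher would in principle change $\pi_{\text{ref}}$ itself. I will state this explicitly as part of the definition of $\nabla J_{\text{on}}^{\delta=0}$, mirroring the analogous offline bias term in Theorem~\ref{thm:consistency_gap}.
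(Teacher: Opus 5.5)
Your proposal is correct and follows the paper's proof. The student term cancels, so the difference is $-\mathbb{E}\bigl[\sum_t \Delta_t \nabla \log \pi_\theta(a_t \mid s_t)\bigr]$, and you bound it by $G\,\mathbb{E}\bigl[\sum_t |\Delta_t|\bigr] \le G\sigma_\Delta$ using the triangle inequality, Assumption~\ref{asm:score}, Jensen, and $w \equiv 1$ at $\theta_{\text{ref}}$; the only difference is ordering (the paper bounds under $\pi_\theta$ for general $\theta$ and then specializes through the IS identity, whereas you specialize to $\pi_{\text{ref}}$ first), and your explicit convention of holding $\pi_{\text{ref}}$ and the rollout distribution fixed is the one the paper uses implicitly.
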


\begin{remark}
Theorems~\ref{thm:consistency_gap}--\ref{thm:consistency_online} show that $G\sigma_\Delta$ biases the gradient of \emph{both} paradigms. Teacher consistency is therefore an important design requirement: when $\sigma_\Delta = 0$, Theorems~\ref{thm:gap}--\ref{thm:fixed_point} apply in full and Lightning OPD matches standard OPD.
\end{remark}
\section{Experiments}
\label{sec:exp}

\subsection{Experimental Setup}
\label{subsec:setup}

\paragraph{Models.}
We train two student models under the \methodterm pipeline, covering different model scales from the Qwen3 model family \cite{yang2025qwen3}. The first uses Qwen3-4B-Base as the student with Qwen3-8B as the teacher. The second uses Qwen3-8B-Base as the student with Qwen3-32B as the teacher. Both pipelines follow the two-stage procedure described in Section~\ref{subsec:pipeline}. The base model is first fine-tuned on teacher-generated trajectories to obtain $\pi_{\text{ref}}$, which is then used to sample rollouts and precompute teacher log-probabilities for the OPD stage.

\paragraph{Training Data.}
The SFT stage uses prompts from OpenThoughts-3~\cite{guha2025openthoughts}, with responses generated by the respective teacher model. For the OPD stage, we train on two domains. Mathematical reasoning uses DAPO-Math-17k~\cite{yu2025dapo}, which provides 17K competition-level math problems spanning a wide range of difficulty. Code generation uses a sampled 30K subset of EpiCoder-func-380k~\cite{wang2025epicoder}, which provides diverse function-level code synthesis problems. For each prompt, we sample a single response from $\pi_{\text{ref}}$ and precompute the corresponding teacher log-probabilities once prior to training, with no teacher server required during the OPD stage.

\paragraph{Benchmarks.}
For math reasoning, we evaluate on AIME 2024~\cite{aimo2024aime}, AIME 2025~\cite{opencompass2025aime}, and HMMT 2025~\cite{balunovic2025matharena}. For code reasoning, we evaluate on LiveCodeBench v5 and v6~\cite{jain2024livecodebench}. In all evaluations, we set the temperature to 0.6, top-$p$ to 0.95, and the maximum generation length to 32,768 for math benchmarks and 40,960 for coding benchmarks. We sample 32 solutions per problem for math benchmarks and 4 solutions per problem for code benchmarks, reporting the average pass@1.

\paragraph{Training Settings.}
The SFT stage is implemented using LlamaFactory~\cite{zheng2024llamafactory} and the OPD stage is implemented using slime~\cite{slime2025}. The OPD stage is trained for 150 steps, which we find sufficient for convergence as shown in Figure~\ref{fig:dyn_aime}. Standard OPD and \methodterm share identical training settings in the OPD stage, differing only in the source of rollouts that standard OPD samples rollouts online from the current student while \methodterm reuses rollouts precomputed from $\pi_{\text{ref}}$ before training begins. Full hyperparameter details are provided in Appendix~\ref{app:hyperparams}.

\subsection{Main Results}

Table~\ref{tab:main} presents evaluation results across five benchmarks for both 4B and 8B model scales. The central finding is that \methodterm, despite eliminating the live teacher server entirely during training, achieves performance on par with standard OPD across all settings, and in several cases marginally exceeds it. This validates our theoretical analysis that under teacher consistency, the offline approximation preserves the same performance optimum as standard OPD. The gains from the OPD stage over the SFT baseline are substantial and consistent across both math and code benchmarks, confirming that on-policy distillation provides strong and transferable post-training improvements. At the 4B scale, compared to ExOPD~\cite{yang2026learning}, a recent OPD baseline, \methodterm achieves substantially better results, reaching 68.1\% compared to 61.0\% on AIME 2024, and the gap widens on code generation where \methodterm reaches 40.3\% on LCB v6 against ExOPD's 29.0\%. At the 8B scale, \methodterm achieves 69.9\% on AIME 2024 and 49.5\% on LiveCodeBench v5. Together, these results demonstrate the effectiveness of \methodterm as a general and efficient post-training framework across model scales and task domains.

\begin{table*}[t]
    \centering
    \caption{Pass@1 on math and code reasoning benchmarks. \methodterm achieves comparable performance to standard OPD across all benchmarks at both model scales, while requiring no live teacher server during training. At the 4B scale, \methodterm substantially outperforms ExOPD~\cite{yang2026learning}, achieving 68.1\% compared to 61.0\% on AIME 2024 and 40.3\% compared to 29.0\% on LCB v6. At the 8B scale, \methodterm reaches 69.9\% on AIME 2024 and 49.5\% on LCB v5. \textbf{Bold} indicates the best result within each model scale.}
    \label{tab:main}
    \setlength{\tabcolsep}{3pt}
    \begin{tabular}{lcccccccc}
    \toprule
    \multirow{2}{*}{\textbf{Method}}
    & \multicolumn{4}{c}{\textbf{Math Reasoning}} & \multicolumn{3}{c}{\textbf{Code Generation}} \\
    \cmidrule(lr){2-5}\cmidrule(lr){6-8}
    & \textbf{AIME 2024} & \textbf{AIME 2025} & \textbf{HMMT 2025} & \textbf{Avg.}
    & \textbf{LCB v5} & \textbf{LCB v6} & \textbf{Avg.} \\
    \midrule
    \multicolumn{8}{l}{\emph{Student: Qwen3-4B-Base\quad Teacher: Qwen3-8B}} \\
    \midrule
    SFT                           & 56.7 & 52.1 & 34.0 & 47.6 & 33.8 & 31.5 & 32.6 \\
    ExOPD~\cite{yang2026learning} & 61.0 & 56.0 & 34.4 & 50.5 & -- & 29.0 & -- \\
    OPD                     & 65.4 & 57.9 & \textbf{39.9} & 54.4 & \textbf{44.2} & 39.3 & \textbf{41.8} \\
    \rowcolor{mitblue}
    \methodterm             & \textbf{68.1} & \textbf{58.4} & 39.8 & \textbf{55.4} & 42.8 & \textbf{40.3} & 41.5 \\
    \midrule
    \multicolumn{8}{l}{\emph{Student: Qwen3-8B-Base\quad Teacher: Qwen3-32B}} \\
    \midrule
    SFT                           & 63.7 & 51.7 & 36.9 & 50.8 & 44.7 & 36.8 & 40.8 \\
    OPD                     & 68.5 & 59.0 & 39.4 & 55.6 & 47.3 & 41.2 & 44.2 \\
    \rowcolor{mitblue}
    \methodterm             & \textbf{69.9} & \textbf{59.2} & \textbf{41.9} & \textbf{57.0} & \textbf{49.5} & \textbf{43.9} & \textbf{46.7} \\
    \bottomrule
    \end{tabular}
    \end{table*}

\subsection{Training Cost}

Table~\ref{tab:cost} compares the training cost of standard OPD and \methodterm. \methodterm achieves a \speedupsmall speedup at the 4B scale, reducing total GPU hours from 72 to just 20, and a \speedup speedup at the 8B scale, bringing the full pipeline down from 120 to just 30 GPU hours. We also provide a per-phase breakdown of the \methodterm pipeline. The actual OPD training stage consumes only a small fraction of this budget, with the remaining cost split between rollout collection and teacher logprob precomputation, both of which are one-time offline operations that require no specialized infrastructure. This stands in sharp contrast to standard OPD, which demands a dedicated multi-GPU teacher server running continuously throughout training. The minimal infrastructure requirement of \methodterm makes high-quality on-policy distillation accessible to practitioners without large-scale training and serving systems.

\begin{table}[t]
    \centering
    \caption{Training cost (GPU hours) of standard OPD vs.\ \methodterm. \methodterm achieves a \speedupsmall speedup at 4B and a \speedup speedup at 8B, requiring only 20 and 30 GPU hours to train a reasoning model respectively. The lower panel breaks down \methodterm costs by stage, showing that the actual OPD training consumes only a moderate fraction of the total budget, highlighting how the minimal infrastructure requirement of \methodterm makes the training highly efficient.}
    \label{tab:cost}
    \begin{tabular}{l|c|c}
    \toprule
    \textbf{Method} & \textit{Qwen3-4B-Base} & \textit{Qwen3-8B-Base} \\
    \midrule
    OPD                  & 72 & 120 \\
    Lightning OPD & 20 & 30 \\
    \midrule
    Speedup              & \textbf{\speedupsmall} & \textbf{\speedup} \\
    \midrule
    \multicolumn{3}{l}{\emph{Lightning OPD breakdown}} \\
    \midrule
    $\llcorner$ Rollout collection           & 10 & 10 \\
    $\llcorner$ Teacher logprob precompute   & 2 & 4 \\
    $\llcorner$ OPD training                 & 8 & 16 \\
    \bottomrule
    \end{tabular}
    \end{table}

\subsection{Scaling to Mixture-of-Experts}
\label{subsec:moe}

We further apply \methodterm to Qwen3-30B-A3B-Base, a 30B-parameter MoE model with 3B active parameters, using Qwen3-30B-A3B-Thinking-2507 as the teacher. We follow the same two-stage pipeline and training settings as the 8B experiments in Section~\ref{subsec:setup}. Standard OPD is infeasible at this scale on a single 8$\times$H100 node, as co-hosting both a 30B student and a 30B teacher for training and scoring exceeds available GPU memory. \methodterm eliminates this bottleneck by precomputing teacher log-probabilities offline, allowing all GPUs to be dedicated to student training. As shown in Table~\ref{tab:moe}, \methodterm reaches 71.0\% on AIME 2024 and 60.8\% on LiveCodeBench v5, achieving state-of-the-art performance among open MoE models at such model scale.

\begin{table*}[t]
    \centering
    \caption{\methodterm applied to a Mixture-of-Experts architecture (Qwen3-30B-A3B). By precomputing teacher log-probabilities offline, \methodterm enables on-policy distillation of a 30B MoE model on a single 8$\times$H100 node, achieving state-of-the-art performance on math and code reasoning tasks. In contrast, OPD runs out of memory as it requires co-hosting both a 30B teacher and a 30B student. \textbf{Bold} indicates improvement over the SFT baseline.}
    \label{tab:moe}
    \setlength{\tabcolsep}{3pt}
    \begin{tabular}{lcccccccc}
    \toprule
    \multirow{2}{*}{\textbf{Method}}
    & \multicolumn{4}{c}{\textbf{Math Reasoning}} & \multicolumn{3}{c}{\textbf{Code Generation}} \\
    \cmidrule(lr){2-5}\cmidrule(lr){6-8}
    & \textbf{AIME 2024} & \textbf{AIME 2025} & \textbf{HMMT 2025} & \textbf{Avg.}
    & \textbf{LCB v5} & \textbf{LCB v6} & \textbf{Avg.} \\
    \midrule
    \multicolumn{8}{l}{\emph{Student: Qwen3-30B-A3B-Base\quad Teacher: Qwen3-30B-A3B-Thinking-2507}} \\
    \midrule
    SFT                   & 66.8 & 63.2 & 44.6 & 58.2 & 39.4 & 33.0 & 36.2 \\
    OPD (OOM)       & \ding{55} & \ding{55} & \ding{55} & \ding{55} & \ding{55} & \ding{55} & \ding{55}  \\
    \rowcolor{mitblue}
    \methodterm     & \textbf{71.0} & \textbf{66.3} & \textbf{48.3} & \textbf{61.9} & \textbf{60.8} & \textbf{54.4} & \textbf{57.6} \\
    \bottomrule
    \end{tabular}
\end{table*}

\begin{table*}[t]
    \centering
    \caption{Teacher consistency ablation (AIME 2024 pass@1, \%). Rows indicate the SFT-stage teacher and columns indicate the OPD-stage teacher. The consistent setting (\textbf{diagonal, bold}) always achieves the best performance for both standard OPD and \methodterm. Teacher inconsistency degrades \methodterm more severely than standard OPD (up to 7 points at 8B), because the fixed rollout distribution compounds the gradient bias when the SFT and OPD teachers differ.}
    \label{tab:ablation_teacher}
    \setlength{\tabcolsep}{5pt}

    \vspace{-0.1cm}
    \begin{subtable}[t]{\linewidth}
    \centering
    \begin{tabular}{l | cc | l | cc}
    \toprule
    \textbf{\emph{Standard OPD}} & \multicolumn{2}{c|}{OPD Teacher} & & \multicolumn{2}{c}{OPD Teacher} \\
    \midrule
    SFT Teacher & Qwen3-8B & QwQ-32B & SFT Teacher & Qwen3-32B & QwQ-32B \\
    \midrule
    \multicolumn{2}{l}{\emph{Student: Qwen3-4B-Base}} & & \multicolumn{2}{l}{\emph{Student: Qwen3-8B-Base}} \\
    \midrule
    \quad Qwen3-8B & \textbf{65.4} & 62.4 & Qwen3-32B & \textbf{68.5} & 64.8 \\
    \quad QwQ-32B  & 61.2 & \textbf{62.8} & QwQ-32B & 65.0 & \textbf{66.5} \\
    \bottomrule
    \end{tabular}
    \end{subtable}

    \vspace{0.1cm}

    \begin{subtable}[t]{\linewidth}
    \centering
    \begin{tabular}{l | cc | l | cc}
    \toprule
    \textbf{\emph{Lightning OPD}} & \multicolumn{2}{c|}{OPD Teacher} & & \multicolumn{2}{c}{OPD Teacher} \\
    \midrule
    SFT Teacher & Qwen3-8B & QwQ-32B & SFT Teacher & Qwen3-32B & QwQ-32B \\
    \midrule
    \multicolumn{2}{l}{\emph{Student: Qwen3-4B-Base}} & & \multicolumn{2}{l}{\emph{Student: Qwen3-8B-Base}} \\
    \midrule
    \quad Qwen3-8B & \textbf{68.1} & 62.5 & Qwen3-32B & \textbf{69.9} & 63.1 \\
    \quad QwQ-32B  & 59.3 & \textbf{63.1} & QwQ-32B & 62.1 & \textbf{68.7} \\
    \bottomrule
    \end{tabular}
    \end{subtable}

    \vspace{-0.3cm}
\end{table*}

\subsection{Ablation Study}

\paragraph{Teacher consistency.}
\looseness=-1
To empirically verify teacher consistency, we cross SFT-stage and OPD-stage teacher choices by introducing QwQ-32B~\cite{qwq32b} alongside Qwen3-32B at the 8B scale and Qwen3-8B at the 4B scale, yielding a full grid of teacher combinations for both standard OPD and \methodterm in Tables~\ref{tab:ablation_teacher}. The results uniformly confirm our theoretical prediction that the teacher-consistent setting on the diagonal always achieves the best performance, while mismatched teachers consistently degrade accuracy. Notably, teacher inconsistency imposes a larger penalty on \methodterm than on standard OPD. At the 8B scale, mismatching from Qwen3-32B SFT to QwQ-32B OPD drops \methodterm by 6.8 points but standard OPD by only 3.7 points. This asymmetry arises because a mismatched SFT teacher corrupts $\pi_{\text{ref}}$ in two roles simultaneously for \methodterm, both as the reference distribution and as the source of fixed rollouts, whereas standard OPD refreshes rollouts from the current student at every step and can partially recover. Teacher consistency is consequently a more critical design requirement for \methodterm than for standard OPD.

\section{Conclusion}
\label{sec:conclusion}

We presented \methodterm, an offline on-policy distillation framework for efficient LLM post-training. By precomputing teacher log-probabilities once over SFT rollouts, \methodterm reduces the infrastructure of OPD to a single standard training job while preserving the dense per-token supervision that makes OPD effective. We further identified teacher consistency as a necessary condition for OPD to work well in general, proving that mismatched SFT and OPD teachers introduce an irreducible gradient bias that causes both paradigms to converge to a suboptimal fixed point, and that under consistency \methodterm provably shares the same optimum as standard OPD. Empirically, \methodterm trained an 8B reasoning model to 69.9\% on AIME 2024 in just 30 GPU hours, achieving \speedup speedup over standard OPD while matching its performance across all benchmarks. We hope these results encourage broader adoption of on-policy distillation in settings where a persistent teacher server is impractical, and that the teacher consistency principle provides a useful design guideline for future OPD research. More generally, our results suggest that much of the practical benefit of on-policy distillation can be retained without the conventional deployment burden of maintaining a live teacher throughout training.

{
    \small
    \bibliographystyle{unsrt}
    \bibliography{arxiv}
}

\clearpage
\appendix

\appendix

\section{Proofs}
\label{app:proofs}

We provide complete proofs of all theoretical results stated in Section~\ref{subsec:theory}. We use Assumptions~\ref{asm:second_moment}--\ref{asm:score} (and Assumption~\ref{asm:mismatch} for the teacher-mismatch results) as stated in Section~\ref{subsec:theory}. We begin with a fundamental preliminary result and then prove each theorem in order.

\paragraph{Notation.} Let $\Omega$ be a finite vocabulary. A prompt $q \in \Omega^*$ is drawn from $p(q)$. A response $x = (a_1, \ldots, a_T)$ is generated as $\pi_\theta(x \mid q) = \prod_{t=1}^{T} \pi_\theta(a_t \mid s_t)$, where $s_t = (q, a_1, \ldots, a_{t-1})$. The sequence-level IS ratio is $w(x;\theta) := \pi_\theta(x \mid q) / \pi_{\text{ref}}(x \mid q)$.

\subsection{Preliminary: Importance-Sampling Decomposition}
\label{app:is_decomp}

We begin with a fundamental identity relating the online and offline gradients via IS reweighting. This is used in all subsequent proofs.

\begin{proposition}[IS Decomposition]
\label{prop:is}
Under Assumption~\ref{asm:support}, the online gradient equals the IS-reweighted offline expectation:
\[
    \nabla J_{\text{on}}(\theta)
    = \mathbb{E}_{x \sim \pi_{\text{ref}}} \!\left[
        w(x;\theta) \cdot \sum_{t=1}^{T} A_t(\theta) \cdot \nabla \log \pi_\theta(a_t \mid s_t)
    \right].
\]
The offline gradient $\nabla J_{\text{off}}(\theta) = \mathbb{E}_{x \sim \pi_{\text{ref}}}[\sum_t A_t(\theta) \cdot \nabla \log \pi_\theta(a_t \mid s_t)]$ is the special case $w \equiv 1$.
\end{proposition}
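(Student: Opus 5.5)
We first make precise what $\nabla J_{\text{on}}$ means under the stop-gradient convention. Fix $\theta$ and freeze the advantage at its current value $\bar A_t := A_t(\theta)$, treated as a function of $(s_t,a_t)$ only. The online update is the gradient, taken only through the sampling distribution, of the surrogate
\[
\tilde J_{\text{on}}(\theta') = \mathbb{E}_{q\sim p,\, x\sim\pi_{\theta'}}\!\Big[\sum_{t=1}^T \bar A_t\Big],
\]
evaluated at $\theta'=\theta$. The offline update is the analogous surrogate $\mathbb{E}_{x\sim\pi_{\text{ref}}}[\sum_t \bar A_t \log \pi_{\theta'}(a_t\mid s_t)]$, differentiated at $\theta'=\theta$. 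This is what the paper intends by ``advantage-weighted policy gradients.''

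For the online term, write the expectation as a finite sum $\sum_x \pi_\theta(x\mid q)\,F(x)$ with $F(x)=\sum_t \bar A_t$. Since $\Omega$ is finite, we can restrict to responses of bounded length, or simply assume the sum converges. Applying the log-derivative trick gives $\nabla_\theta \pi_\theta(x\mid q)=\pi_\theta(x\mid q)\sum_t \nabla\log\pi_\theta(a_t\mid s_t)$, which yields the sequence-level REINFORCE form $\mathbb{E}_{x\sim\pi_\theta}[F(x)\sum_t\nabla\log\pi_\theta(a_t\mid s_t)]$.

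To reach the per-token form $f(x;\theta)=\sum_t \bar A_t\nabla\log\pi_\theta(a_t\mid s_t)$ stated in the proposition, we either adopt it as the token-level OPD update used in practice, or derive it via causality. The causality route drops terms in which the score at step $t$ multiplies advantages at earlier steps $t'<t$: conditioning on $s_t$, each such term has zero mean because $\mathbb{E}_{a_t\sim\pi_\theta(\cdot\mid s_t)}[\nabla\log\pi_\theta(a_t\mid s_t)]=0$. Future advantages ($t'>t$) survive this argument, so an exact match with $f$ needs the convention that the per-token update keeps only the matched $t'=t$ term. We state this as the definition of the OPD update. The Appendix says these gradients are ``formalized'' there, so defining $\nabla J_{\text{on}}:=\mathbb{E}_{x\sim\pi_\theta}[f(x;\theta)]$ is legitimate.

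Next comes the change of measure. For each $q$, split the sum over $x$ into responses with $\pi_\theta(x\mid q)>0$ and the rest. By Assumption~\ref{asm:support}, every $x$ with $\pi_\theta(x\mid q)>0$ also has $\pi_{\text{ref}}(x\mid q)>0$, so $w(x;\theta)$ is well defined on that set. Responses with $\pi_{\text{ref}}(x\mid q)>0=\pi_\theta(x\mid q)$ contribute $w=0$ and add nothing. Hence
\[
\sum_x \pi_\theta(x\mid q) f(x;\theta) = \sum_x \pi_{\text{ref}}(x\mid q)\,w(x;\theta)\,f(x;\theta),
\]
and taking the expectation over $q\sim p$ gives the claimed identity. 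Setting $w\equiv1$ reproduces the offline update by definition.

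Interchanging gradient and expectation is harmless, since the sums are finite, or dominated by Assumptions~\ref{asm:second_moment} and~\ref{asm:score} together with Cauchy--Schwarz. The main obstacle is therefore not this computation but pinning down the stop-gradient definition so that the per-token form is exact rather than holding only up to the zero-mean causality terms. Our plan is to define the update at the token level explicitly and then show the change of measure is exact given that definition.
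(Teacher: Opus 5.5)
Your proposal is correct and takes essentially the same route as the paper. The paper also treats $\nabla J_{\text{on}}$ as the stop-gradient per-token update $\mathbb{E}_{x\sim\pi_\theta}[f(x;\theta)]$ and obtains the identity from the finite-sum change of measure $\pi_\theta(x) = \pi_{\text{ref}}(x)\,w(x;\theta)$, justified by support coverage. Your additional points are sound refinements that the paper leaves implicit: that the per-token form is a convention rather than the exact gradient of $J_{\text{on}}$ (the reward-to-go cross terms survive), and that responses with $\pi_\theta(x\mid q)=0$ contribute nothing.
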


\begin{proof}
Since $\Omega$ is a finite vocabulary, all expectations are finite sums. For any function $f(x)$:
\[
    \mathbb{E}_{x \sim \pi_\theta}[f(x)]
    = \sum_{x} \pi_\theta(x)\, f(x)
    = \sum_{x} \pi_{\text{ref}}(x) \cdot \frac{\pi_\theta(x)}{\pi_{\text{ref}}(x)} \cdot f(x)
    = \mathbb{E}_{x \sim \pi_{\text{ref}}}[w(x;\theta)\, f(x)],
\]
where the second equality requires $\pi_{\text{ref}}(x) > 0$ wherever $\pi_\theta(x) > 0$, which holds by Assumption~\ref{asm:support}. Applying this with $f(x) = \sum_t A_t(\theta) \cdot \nabla \log \pi_\theta(a_t \mid s_t)$ (stop-gradient applied to $A_t(\theta)$) gives the result.
\end{proof}

\paragraph{Remark on the surrogate gradient.}
The per-trajectory gradient $f(x;\theta) = \sum_t A_t(\theta) \cdot \nabla \log \pi_\theta(a_t \mid s_t)$ treats the advantage $A_t(\theta)$ as a fixed scalar, i.e., stop-gradient is applied to $A_t$. This is the standard update rule used in all OPD implementations~\cite{agarwal2024policy,lu2025onpolicydistillation,yang2025qwen3} and yields an advantage-weighted policy gradient. We denote these surrogate gradients by $\nabla J_{\text{on}}$ and $\nabla J_{\text{off}}$ throughout for notational consistency with the OPD literature.

\subsection{Proof of Theorem~\ref{thm:gap}}
\label{app:proof_gap}

\begin{proof}
Define the per-trajectory gradient $f(x;\theta) := \sum_{t=1}^{T} A_t(\theta) \cdot \nabla \log \pi_\theta(a_t \mid s_t)$. By Proposition~\ref{prop:is}:
\[
    \nabla J_{\text{on}}(\theta) - \nabla J_{\text{off}}(\theta)
    = \mathbb{E}_{x \sim \pi_{\text{ref}}}\!\left[(w(x;\theta) - 1)\, f(x;\theta)\right].
\]
Applying the Cauchy--Schwarz inequality to the expectation:
\[
    \left\|\mathbb{E}_{\pi_{\text{ref}}}[(w - 1)\,f]\right\|_2
    \leq \sqrt{\mathbb{E}_{\pi_{\text{ref}}}\!\left[(w-1)^2\right]}
    \cdot \sqrt{\mathbb{E}_{\pi_{\text{ref}}}\!\left[\|f\|_2^2\right]}.
\]

\paragraph{First factor.}
Since $\mathbb{E}_{\pi_{\text{ref}}}[w(x;\theta)] = \sum_x \pi_\theta(x) = 1$:
\[
    \mathbb{E}_{\pi_{\text{ref}}}\!\left[(w - 1)^2\right]
    = \mathbb{E}_{\pi_{\text{ref}}}\!\left[w^2\right] - 2\,\mathbb{E}_{\pi_{\text{ref}}}[w] + 1
    = \mathbb{E}_{\pi_{\text{ref}}}\!\left[w^2\right] - 1
    = \chi^2(\pi_\theta \,\|\, \pi_{\text{ref}}).
\]

\paragraph{Second factor.}
By Assumption~\ref{asm:score}, $\|\nabla \log \pi_\theta(a_t \mid s_t)\|_2 \leq G$ for each $t$. Applying the triangle inequality:
\[
    \|f(x;\theta)\|_2
    = \left\|\sum_{t=1}^T A_t(\theta) \cdot \nabla \log \pi_\theta(a_t \mid s_t)\right\|_2
    \leq G \cdot \sum_{t=1}^T |A_t(\theta)|.
\]
Therefore:
\[
    \mathbb{E}_{\pi_{\text{ref}}}\!\left[\|f(x;\theta)\|_2^2\right]
    \leq G^2 \cdot \mathbb{E}_{\pi_{\text{ref}}}\!\left[\left(\sum_{t=1}^T |A_t(\theta)|\right)^{\!2}\right]
    \leq G^2\, \sigma_A^2,
\]
where the last inequality uses Assumption~\ref{asm:second_moment}.

\paragraph{Combining.}
\[
    \left\|\nabla J_{\text{on}}(\theta) - \nabla J_{\text{off}}(\theta)\right\|_2
    \leq \sqrt{\chi^2(\pi_\theta \,\|\, \pi_{\text{ref}})} \cdot G\,\sigma_A. \qquad \square
\]
\end{proof}

\begin{corollary}[Zero Gap at Initialization]
\label{cor:zero_gap}
When $\pi_\theta = \pi_{\text{ref}}$, $\chi^2(\pi_\theta \,\|\, \pi_{\text{ref}}) = 0$, so $\|\nabla J_{\text{on}}(\theta) - \nabla J_{\text{off}}(\theta)\|_2 = 0$. Under standard smoothness conditions, the gap grows at rate $O(\eta\, k)$ after $k$ gradient steps with step size $\eta$.
\end{corollary}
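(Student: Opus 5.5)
The statement is Corollary~\ref{cor:zero_gap}. It has two parts: an exact identity at initialization and a growth rate for later steps. I treat them separately.

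\emph{Zero gap at initialization.} If $\pi_\theta = \pi_{\text{ref}}$, then $w(x;\theta) = 1$ for every $x$ in the support of $\pi_{\text{ref}}$. Hence $\mathbb{E}_{\pi_{\text{ref}}}[w^2] = 1$, and the $\chi^2$-divergence as defined in Theorem~\ref{thm:gap} is zero. Substituting into the bound of Theorem~\ref{thm:gap} gives
\[
\|\nabla J_{\text{on}}(\theta) - \nabla J_{\text{off}}(\theta)\|_2 \le G\sigma_A\cdot 0 = 0.
\]
The same conclusion follows even more directly from Proposition~\ref{prop:is}: the integrand $(w-1)f$ vanishes identically, so the difference of gradients is exactly zero without using any bound.

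\emph{The $O(\eta k)$ growth rate.} This part depends on what ``standard smoothness conditions'' means, so I fix it explicitly. I assume the map $\theta \mapsto \nabla J_{\text{on}}(\theta) - \nabla J_{\text{off}}(\theta)$ is $L$-Lipschitz on $\Theta$, and that each update has bounded norm, $\|\nabla J_{\text{off}}\| \le G\sigma_A$. The update bound follows from Jensen's inequality together with the second-factor estimate in the proof of Theorem~\ref{thm:gap}.

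With these assumptions, the argument runs as follows.
\begin{itemize}
\item Iterating $\theta_{k+1} = \theta_k + \eta \nabla J_{\text{off}}(\theta_k)$ and applying the triangle inequality gives $\|\theta_k - \theta_{\text{ref}}\| \le \eta k G\sigma_A$.
\item The gap at $\theta_{\text{ref}}$ is zero by the first part.
\item Lipschitzness of the gap map then gives $\|\nabla J_{\text{on}}(\theta_k) - \nabla J_{\text{off}}(\theta_k)\| \le L G\sigma_A\,\eta k = O(\eta k)$.
\end{itemize}

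An alternative route keeps the $\chi^2$ form of Theorem~\ref{thm:gap}. Since $\chi^2 \ge 0$ attains its minimum at $\theta_{\text{ref}}$, smoothness of $\chi^2$ in $\theta$ gives $\chi^2(\pi_\theta\|\pi_{\text{ref}}) \le \tfrac{M}{2}\|\theta-\theta_{\text{ref}}\|^2$ for some constant $M$. Therefore $\sqrt{\chi^2} \le \sqrt{M/2}\,\|\theta-\theta_{\text{ref}}\|$, and the same $O(\eta k)$ bound follows from Theorem~\ref{thm:gap}.

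The main obstacle is that the corollary's phrase ``standard smoothness conditions'' is not pinned down in the paper. The delicate point is whether the gap map (or $\chi^2$) is actually smooth over the region the iterates visit. Assumption~\ref{asm:support} keeps $w$ finite, but it does not by itself give a uniform Lipschitz or curvature constant. I would therefore state the Lipschitz/Hessian bound as an explicit hypothesis, local to a ball of radius $\eta k G\sigma_A$ around $\theta_{\text{ref}}$, rather than try to derive it.
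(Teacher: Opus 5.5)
Your zero-gap argument is the paper's own: $w\equiv 1$ on the support of $\pi_{\text{ref}}$, so $\chi^2=0$. Your remark that Proposition~\ref{prop:is} gives the identity directly is a small sharpening.

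For the growth rate the paper gives only a one-line sketch, a first-order Taylor expansion of $\chi^2(\pi_{\theta-\eta g}\,\|\,\pi_{\text{ref}})$ along each step, and your two routes differ from it.

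Your primary route avoids $\chi^2$ entirely. Jensen plus the second-factor estimate bound the displacement by $\eta k G\sigma_A$, and a Lipschitz constant for the gap map finishes. This is the most elementary argument, but $L$ is an opaque constant and you lose the drift interpretation that Theorem~\ref{thm:gap} provides.

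Your second route is the careful version of the paper's idea, and it differs in one substantive way. It uses that $\theta_{\text{ref}}$ minimizes $\chi^2$: indeed $\nabla\chi^2=2\,\mathbb{E}_{\pi_\theta}[w\,\nabla\log\pi_\theta(x)]$, which vanishes at $\theta_{\text{ref}}$ because the score has zero mean. Hence $\chi^2$ is quadratically small and $\sqrt{\chi^2}=O(\|\theta-\theta_{\text{ref}}\|)=O(\eta k)$. Taken at face value, a first-order expansion with a merely bounded $\nabla\chi^2$ yields only $\chi^2=O(\eta k)$. Through Theorem~\ref{thm:gap} that gives a gap of order $\sqrt{\eta k}$, which is weaker than claimed when $\eta k$ is small. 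Your quadratic bound is exactly what delivers the stated $O(\eta k)$ rate for the gap itself rather than for $\chi^2$.

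Making the Lipschitz or Hessian constant an explicit local hypothesis, as you do, is the right reading of ``standard smoothness conditions.''
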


\begin{proof}
At $\pi_\theta = \pi_{\text{ref}}$, $w(x;\theta) = 1$ for all $x$, so $\mathbb{E}_{\pi_{\text{ref}}}[w^2] = 1$ and $\chi^2 = 0$. The growth rate follows from a first-order Taylor expansion of $\chi^2(\pi_{\theta - \eta g} \| \pi_{\text{ref}})$ around $\theta$.
\end{proof}

\paragraph{Remark on $\chi^2$ versus KL.}
\looseness=-1
A bound via Pinsker's inequality yields $\|\nabla J_{\text{on}} - \nabla J_{\text{off}}\|_2 \leq M T G \sqrt{2\,\mathrm{KL}(\pi_\theta \| \pi_{\text{ref}})}$, where $M = \sup_t|A_t(\theta)|$. This is vacuous in practice because $M$ diverges whenever $\pi_\theta(a_t|s_t) \to 0$. Theorem~\ref{thm:gap} avoids this by using the $L^2$ constant $\sigma_A \ll MT$ and replacing KL with $\chi^2$. Although $\chi^2 \geq \mathrm{KL}$, advantage clipping and KL regularization keep $\chi^2$ small throughout training.

\subsection{Proof of Theorem~\ref{thm:fixed_point}}
\label{app:proof_fixed_point}

\begin{proof}
We first establish the identity $J_{\text{on}}(\theta) = -\mathrm{KL}(\pi_\theta \,\|\, \pi_T)$. By definition:
\[
    J_{\text{on}}(\theta)
    = \mathbb{E}_{x \sim \pi_\theta}\!\left[\sum_{t=1}^T \bigl(\log \pi_T(a_t \mid s_t) - \log \pi_\theta(a_t \mid s_t)\bigr)\right]
    = \mathbb{E}_{x \sim \pi_\theta}\!\left[\log \frac{\pi_T(x \mid q)}{\pi_\theta(x \mid q)}\right]
    = -\mathrm{KL}(\pi_\theta \,\|\, \pi_T),
\]
where we used the autoregressive factorization $\pi_\theta(x \mid q) = \prod_t \pi_\theta(a_t \mid s_t)$. Since $\mathrm{KL} \geq 0$, we have $J_{\text{on}}(\theta) \leq 0$ with equality if and only if $\pi_\theta = \pi_T$ on $\operatorname{supp}(\pi_\theta)$. Therefore, any $\theta^* \in \arg\min_{\theta \in \Theta} \mathrm{KL}(\pi_\theta \,\|\, \pi_T)$ is a global maximizer of $J_{\text{on}}$.

\paragraph{Achievable case ($\pi_T \in \Pi_\Theta$).}
When $\pi_T \in \Pi_\Theta$, the global maximizer $\theta^*$ achieves $\pi_{\theta^*} = \pi_T$, so $A_t(\theta^*) = \log \pi_T(a_t \mid s_t) - \log \pi_{\theta^*}(a_t \mid s_t) = 0$ for all $t$ and all $(q, x)$. Both the online and offline OPD updates vanish:
\[
    \nabla J_{\text{on}}(\theta^*) = \mathbb{E}_{\pi_{\theta^*}}\!\left[\sum_t 0 \cdot \nabla \log \pi_{\theta^*}(a_t \mid s_t)\right] = \mathbf{0}, \qquad
    \nabla J_{\text{off}}(\theta^*) = \mathbb{E}_{\pi_{\text{ref}}}\!\left[\sum_t 0 \cdot \nabla \log \pi_{\theta^*}(a_t \mid s_t)\right] = \mathbf{0}.
\]
Thus $\theta^*$ is a shared fixed point of both standard OPD and Lightning OPD.

\paragraph{Capacity-limited case ($\pi_T \notin \Pi_\Theta$).}
When $\pi_T$ cannot be exactly represented in $\Pi_\Theta$, the shared fixed-point argument above does not directly apply. However, the irreducible approximation error $\varepsilon_{\mathrm{approx}} = \min_{\theta \in \Theta} \mathrm{KL}(\pi_\theta \,\|\, \pi_T) > 0$ lower-bounds both methods, and Theorem~\ref{thm:gap} ensures that the per-step gradient discrepancy remains controlled by the policy drift $\chi^2(\pi_\theta \,\|\, \pi_{\text{ref}})$. As confirmed empirically in Figure~\ref{fig:dyn_iw}, this drift stays small throughout training, so the offline and online updates remain close in practice.
\end{proof}

\paragraph{Heuristic error decomposition.}
Combining Theorems~\ref{thm:gap} and~\ref{thm:fixed_point}, we can informally decompose the final student's divergence from the teacher as:
\[
    \mathrm{KL}(\pi_{\theta_{\text{final}}} \,\|\, \pi_T)
    \approx
    \underbrace{\varepsilon_{\text{approx}}}_{\substack{\text{model capacity} \\ \text{(irreducible)}}}
    +
    \underbrace{\varepsilon_{\text{opt}}}_{\substack{\text{optimisation error} \\ (\to 0\text{ with training})}}
    +
    \underbrace{O\!\left(G\,\sigma_A\sqrt{\chi^2(\pi_\theta\|\pi_{\text{ref}})}\right)}_{\text{offline-online distribution gap}},
\]
where $\varepsilon_{\text{approx}} = \min_{\theta \in \Theta} \mathrm{KL}(\pi_\theta \| \pi_T)$. Switching from offline to online rollouts reduces only the third term, not $\varepsilon_{\text{approx}}$, which is the dominant term determined by model capacity. This decomposition is not a formal bound for the surrogate gradient dynamics, but provides useful intuition for why Lightning OPD performs comparably to standard OPD in practice.

\subsection{Proof of Theorem~\ref{thm:implicit_reg} (Gradient Decomposition)}
\label{app:proof_implicit_reg}

\begin{proof}
Recall $f(x;\theta) = \sum_t A_t(\theta) \cdot \nabla \log \pi_\theta(a_t \mid s_t)$. By Proposition~\ref{prop:is}:
\[
    \nabla J_{\text{on}}(\theta) = \mathbb{E}_{\pi_{\text{ref}}}[w(x;\theta) \cdot f(x;\theta)].
\]
Using the covariance decomposition $\mathbb{E}[wf] = \mathbb{E}[w]\,\mathbb{E}[f] + \operatorname{Cov}[w, f]$ and $\mathbb{E}_{\pi_{\text{ref}}}[w] = 1$:
\[
    \nabla J_{\text{on}}(\theta)
    = \mathbb{E}_{\pi_{\text{ref}}}[f] + \operatorname{Cov}_{\pi_{\text{ref}}}\!\left[w,\, f\right]
    = \nabla J_{\text{off}}(\theta) + \operatorname{Cov}_{\pi_{\text{ref}}}\!\left[w(x;\theta),\, f(x;\theta)\right].
\]
Rearranging gives $\nabla J_{\text{off}}(\theta) = \nabla J_{\text{on}}(\theta) - \operatorname{Cov}_{\pi_{\text{ref}}}[w, f]$.

\paragraph{Regularization effect of the covariance term.}
At $\pi_\theta = \pi_{\text{ref}}$, $w \equiv 1$ is constant so $\operatorname{Cov}[w, f] = \mathbf{0}$ (the zero vector) and $\nabla J_{\text{off}} = \nabla J_{\text{on}}$ exactly. As $\pi_\theta$ drifts from $\pi_{\text{ref}}$, $w$ becomes large on trajectories that $\pi_\theta$ over-weights relative to $\pi_{\text{ref}}$; on those same trajectories $f$ (a gradient vector) also tends to have large magnitude. The covariance vector $\operatorname{Cov}[w, f]$ therefore grows in norm with drift and, when projected onto the drift direction $\theta - \theta_{\text{ref}}$, opposes further movement away from $\pi_{\text{ref}}$. This produces a restoring-force effect: $\nabla J_{\text{off}}$ subtracts a component aligned with the drift direction from $\nabla J_{\text{on}}$, implicitly penalizing deviation from $\pi_{\text{ref}}$ without requiring an explicit KL regularization term.
\end{proof}

\subsection{Proof of Theorem~\ref{thm:consistency_gap}}
\label{app:proof_consistency_gap}

We consider the case where the SFT-stage teacher $\pi_T^{\text{SFT}}$ and the OPD-stage reference teacher $\pi_T^{\text{OPD}}$ differ. The SFT model $\pi_{\text{ref}}$ was trained on data generated by $\pi_T^{\text{SFT}}$, so its distribution concentrates on trajectories preferred by $\pi_T^{\text{SFT}}$. However, the advantage function $A_t(\theta) = \log \pi_T^{\text{OPD}}(a_t \mid s_t) - \log \pi_\theta(a_t \mid s_t)$ reflects $\pi_T^{\text{OPD}}$. This mismatch introduces an irreducible additive bias.

\begin{proof}
Define $\delta := \chi^2(\pi_T^{\text{SFT}} \| \pi_T^{\text{OPD}})$ and $\Delta_t := \log \pi_T^{\text{SFT}}(a_t \mid s_t) - \log \pi_T^{\text{OPD}}(a_t \mid s_t)$. The advantage under teacher inconsistency decomposes as:
\[
    A_t(\theta)
    = \underbrace{\left(\log \pi_T^{\text{SFT}}(a_t \mid s_t) - \log \pi_\theta(a_t \mid s_t)\right)}_{\text{consistent advantage } A_t^{\text{cons}}(\theta)}
      - \Delta_t.
\]
Correspondingly, define $f_{\text{cons}}(x;\theta) = \sum_t A_t^{\text{cons}}(\theta) \cdot \nabla \log \pi_\theta(a_t \mid s_t)$ and $f_\Delta(x;\theta) = -\sum_t \Delta_t \cdot \nabla \log \pi_\theta(a_t \mid s_t)$, so $f = f_{\text{cons}} + f_\Delta$.

By Proposition~\ref{prop:is} and the triangle inequality:
\[
    \left\|\nabla J_{\text{on}}(\theta) - \nabla J_{\text{off}}(\theta)\right\|_2
    = \left\|\mathbb{E}_{\pi_{\text{ref}}}[(w-1)f]\right\|_2
    \leq \underbrace{\left\|\mathbb{E}_{\pi_{\text{ref}}}[(w-1)f_{\text{cons}}]\right\|_2}_{\text{(I)}}
      + \underbrace{\left\|\mathbb{E}_{\pi_{\text{ref}}}[(w-1)f_\Delta]\right\|_2}_{\text{(II)}}.
\]

\paragraph{Term (I): Consistent part.}
This is identical to the proof of Theorem~\ref{thm:gap} with the consistent advantage, giving:
\[
    \text{(I)} \leq G\,\sigma_A \cdot \sqrt{\chi^2(\pi_\theta \,\|\, \pi_{\text{ref}})}.
\]

\paragraph{Term (II): Mismatch part.}
By the triangle inequality and Assumption~\ref{asm:score}, $\|f_\Delta(x;\theta)\|_2 \leq G \cdot \sum_t |\Delta_t|$. Applying Cauchy--Schwarz:
\[
    \text{(II)}
    \leq \sqrt{\chi^2(\pi_\theta\|\pi_{\text{ref}})} \cdot G \cdot \sqrt{\mathbb{E}_{\pi_{\text{ref}}}\!\left[\left(\sum_t |\Delta_t|\right)^{\!2}\right]}
    \leq G\,\sigma_\Delta \cdot \sqrt{\chi^2(\pi_\theta\|\pi_{\text{ref}})},
\]
where the last step applies Assumption~\ref{asm:mismatch}. Combining Terms (I) and (II):
\[
    \left\|\nabla J_{\text{on}}(\theta) - \nabla J_{\text{off}}(\theta)\right\|_2
    \leq G \cdot (\sigma_A + \sigma_\Delta) \cdot \sqrt{\chi^2(\pi_\theta \,\|\, \pi_{\text{ref}})}.
\]
At $\pi_\theta = \pi_{\text{ref}}$ (where $w \equiv 1$), $\chi^2 = 0$ and the online-offline gap is exactly zero, regardless of teacher mismatch.

\paragraph{Offline gradient bias.}
Separately, the mismatched offline gradient carries a persistent bias relative to the consistent offline gradient. Writing $\nabla J_{\text{off}} = \mathbb{E}_{\pi_{\text{ref}}}[f_{\text{cons}} + f_\Delta]$ and $\nabla J_{\text{off}}^{\delta=0} = \mathbb{E}_{\pi_{\text{ref}}}[f_{\text{cons}}]$, their difference is $\mathbb{E}_{\pi_{\text{ref}}}[f_\Delta]$, which can be nonzero when $\sigma_\Delta > 0$. By the triangle inequality, Jensen's inequality, and Assumption~\ref{asm:mismatch}:
\[
    \left\|\nabla J_{\text{off}} - \nabla J_{\text{off}}^{\delta=0}\right\|_2
    = \left\|\mathbb{E}_{\pi_{\text{ref}}}[f_\Delta]\right\|_2
    \leq \mathbb{E}_{\pi_{\text{ref}}}\!\left[\|f_\Delta\|_2\right]
    \leq G \cdot \mathbb{E}_{\pi_{\text{ref}}}\!\left[\sum_t |\Delta_t|\right]
    \leq G\,\sigma_\Delta.
\]
This bias is independent of $\chi^2$ and persists even at initialization, corrupting the offline update direction throughout training.
\end{proof}

\subsection{Proof of Theorem~\ref{thm:consistency_online}}
\label{app:proof_consistency_online}

\begin{proof}
Let $\nabla J_{\text{on}}^{\delta=0}(\theta)$ denote the standard OPD gradient under a consistent teacher ($\pi_T^{\text{SFT}} = \pi_T^{\text{OPD}}$), and let $\nabla J_{\text{on}}(\theta)$ denote the gradient under the mismatched teacher $\pi_T^{\text{OPD}} \neq \pi_T^{\text{SFT}}$. For any iterate $\theta$ reachable from $\pi_{\text{ref}}$, their difference is:
\begin{align*}
    \nabla J_{\text{on}}(\theta) - \nabla J_{\text{on}}^{\delta=0}(\theta)
    &= \mathbb{E}_{x \sim \pi_\theta}\!\left[\sum_t \left(A_t^{\text{OPD}}(\theta) - A_t^{\text{SFT}}(\theta)\right) \cdot \nabla \log \pi_\theta(a_t \mid s_t)\right] \\
    &= -\mathbb{E}_{x \sim \pi_\theta}\!\left[\sum_t \Delta_t \cdot \nabla \log \pi_\theta(a_t \mid s_t)\right],
\end{align*}
where $\Delta_t = \log \pi_T^{\text{SFT}}(a_t \mid s_t) - \log \pi_T^{\text{OPD}}(a_t \mid s_t)$.

By the triangle inequality and Assumption~\ref{asm:score}, $\|\sum_t \Delta_t \cdot \nabla \log \pi_\theta(a_t \mid s_t)\|_2 \leq G \cdot \sum_t |\Delta_t|$. Taking the norm outside the expectation and applying Jensen's inequality:
\[
    \left\|\nabla J_{\text{on}}(\theta) - \nabla J_{\text{on}}^{\delta=0}(\theta)\right\|_2
    \leq G \cdot \mathbb{E}_{x \sim \pi_\theta}\!\left[\sum_t |\Delta_t|\right]
    \leq G \cdot \sqrt{\mathbb{E}_{x \sim \pi_\theta}\!\left[\left(\sum_t |\Delta_t|\right)^{\!2}\right]}.
\]
Since $\theta$ is reachable from $\pi_{\text{ref}}$, by Assumption~\ref{asm:support} $\pi_\theta \ll \pi_{\text{ref}}$. Applying the IS identity (Proposition~\ref{prop:is}):
\[
    \mathbb{E}_{x \sim \pi_\theta}\!\left[\left(\sum_t |\Delta_t|\right)^{\!2}\right]
    = \mathbb{E}_{x \sim \pi_{\text{ref}}}\!\left[w(x;\theta)\left(\sum_t |\Delta_t|\right)^{\!2}\right].
\]
At initialization $\theta = \theta_{\text{ref}}$, we have $w \equiv 1$, so this reduces exactly to $\mathbb{E}_{\pi_{\text{ref}}}[(\sum_t |\Delta_t|)^2] \leq \sigma_\Delta^2$ by Assumption~\ref{asm:mismatch}. Substituting:
\[
    \left\|\nabla J_{\text{on}}(\theta_{\text{ref}}) - \nabla J_{\text{on}}^{\delta=0}(\theta_{\text{ref}})\right\|_2
    \leq G\,\sigma_\Delta. \qquad \square
\]
\end{proof}

\paragraph{Remark (extension beyond initialization).}
For iterates $\theta$ with bounded drift ($\chi^2(\pi_\theta \,\|\, \pi_{\text{ref}}) \leq C$), we expect the bound to remain of similar order. Intuitively, $\Delta_t$ depends only on the two teachers and the token sequence, not on $\theta$, so the covariance between $w$ and $(\sum|\Delta_t|)^2$ stays small when drift is small. The term $G\,\sigma_\Delta$ thus biases the gradient of standard OPD whenever $\sigma_\Delta > 0$, degrading the effective convergence point relative to the fixed point of Theorem~\ref{thm:fixed_point}.

\section{Implementation Details}
\label{app:hyperparams}

Tables~\ref{tab:sft_hyperparams} and~\ref{tab:opd_hyperparams} list the full hyperparameter configurations for the SFT and OPD stages respectively. For the OPD stage, the same hyperparameters are used for both the math and code domains. We set the maximum response length to 4,096 tokens during OPD training. Although the evaluation generation length is significantly longer (up to 40,960 tokens for code benchmarks), we find that training with 4,096 tokens already achieves optimal performance while offering substantially better training efficiency; increasing the rollout length beyond this threshold does not improve results. For code generation, the OPD stage is initialized from the math-trained OPD checkpoint rather than the SFT checkpoint directly. We find this consistently outperforms initializing code OPD from the SFT model, consistent with prior findings that math reasoning training provides a stronger initialization for code training~\cite{Nemotron_Cascade}.

\begin{table}[ht]
    \centering
    \caption{SFT stage hyperparameters.}
    \label{tab:sft_hyperparams}
    \setlength{\tabcolsep}{8pt}
    \begin{tabular}{lcc}
    \toprule
    \textbf{Hyperparameter} & \textbf{4B Scale} & \textbf{8B Scale} \\
    \midrule
    Training steps          & 3000 & 3000 \\
    Global batch size       & 256 & 128 \\
    Max sequence length     & 16384 & 16384 \\
    Learning rate           & $8 \times 10^{-5}$ & $8 \times 10^{-5}$ \\
    LR schedule             & cosine & cosine \\
    Warmup ratio            & 0.1 & 0.1 \\
    Packing                 & \checkmark & \checkmark \\
    DeepSpeed stage         & ZeRO-0 & ZeRO-1 \\
    \bottomrule
    \end{tabular}
\end{table}

\begin{table}[h]
    \centering
    \caption{OPD stage hyperparameters for standard OPD and \methodterm. The same configuration is used for both the math and code domains.}
    \label{tab:opd_hyperparams}
    \setlength{\tabcolsep}{10pt}
    \begin{tabular}{lcc}
    \toprule
    \textbf{Hyperparameter} & \textbf{4B Scale} & \textbf{8B Scale} \\
    \midrule
    Training steps          & 150 & 150 \\
    Global batch size       & 256 & 256 \\
    Max response length     & 4096 & 4096 \\
    Learning rate           & $2 \times 10^{-6}$ & $2 \times 10^{-6}$ \\
    LR schedule             & constant & constant \\
    Weight decay            & 0.1 & 0.1 \\
    Adam $\beta_1$          & 0.9 & 0.9 \\
    Adam $\beta_2$          & 0.98 & 0.98 \\
    Rollout temperature     & 0.8 & 0.8 \\
    Rollout top-$p$         & 1.0 & 1.0 \\
    Advantage clip range    & $[-10, 10]$ & $[-10, 10]$ \\
    Tensor parallel size    & 2 & 4 \\
    \bottomrule
    \end{tabular}
\end{table}

\section{Additional Experimental Analysis}
\label{app:extra_exp}

\subsection{Training Dynamics}
\label{app:training_dynamics}

\begin{figure*}[t]
    \centering
    \begin{subfigure}[t]{0.32\linewidth}
        \centering
        \includegraphics[width=\linewidth]{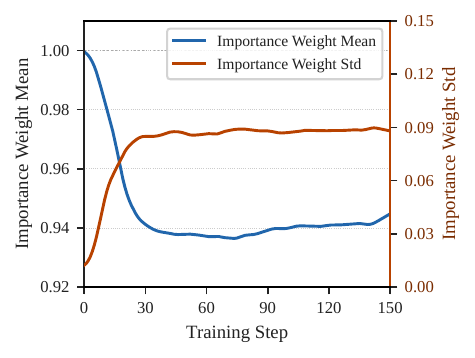}
        \caption{Importance weight dynamics.}
        \label{fig:dyn_iw}
    \end{subfigure}
    \hfill
    \begin{subfigure}[t]{0.32\linewidth}
        \centering
        \includegraphics[width=\linewidth]{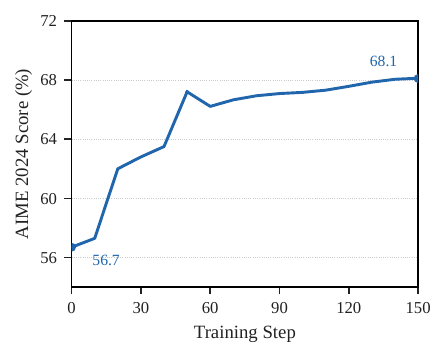}
        \caption{AIME 2024 score vs.\ training step.}
        \label{fig:dyn_aime}
    \end{subfigure}
    \hfill
    \begin{subfigure}[t]{0.32\linewidth}
        \centering
        \includegraphics[width=\linewidth]{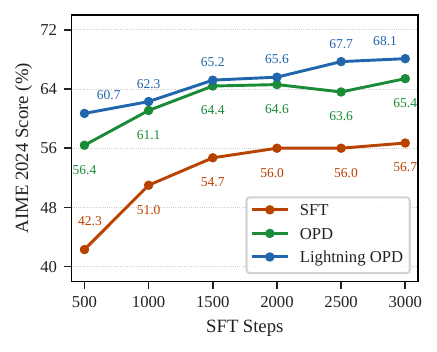}
        \caption{AIME 2024 score vs.\ SFT steps.}
        \label{fig:dyn_sft}
    \end{subfigure}
    \caption{Training dynamics of \methodterm (Qwen3-4B-Base student).
    \textbf{(a)} The mean importance weight $w_t = \pi_\theta / \pi_{\text{ref}}$ drops rapidly to $\approx 0.94$ within the first 20 steps and plateaus, with its standard deviation stabilizing concurrently, validating the implicit regularization in Theorem~\ref{thm:implicit_reg}.
    \textbf{(b)} AIME 2024 pass@1 rises steeply in the first 50 OPD steps and saturates thereafter, confirming that 150 steps is a sufficient training budget.
    \textbf{(c)} AIME 2024 pass@1 of the SFT model, standard OPD, and \methodterm as a function of SFT checkpoint quality. All three improve consistently with more SFT steps, and both OPD variants provide a large, stable gain on top of the SFT baseline at every checkpoint.}
    \label{fig:training_dynamics}
\end{figure*}

\looseness=-1
Figure~\ref{fig:training_dynamics} examines the internal dynamics of \methodterm across both stages of training. Figure~\ref{fig:dyn_iw} tracks the per-token importance weight $w_t = \pi_\theta / \pi_{\text{ref}}$ throughout OPD training. The mean drops from 1 to 0.94 within the first 20 steps and then plateaus, while the standard deviation rises sharply in the same early phase before stabilizing at a moderate level. The mean remaining close to 0.94 indicates that the student policy stays near the reference distribution throughout training, and the standard deviation stabilizing below 0.1 indicates that per-token distributional shift is consistently small. Both quantities remaining bounded confirms the implicit regularization of Theorem~\ref{thm:implicit_reg}, which shows that fixing rollouts to $\pi_{\text{ref}}$ automatically constrains both the magnitude and spread of policy drift without any explicit KL penalty. Figure~\ref{fig:dyn_aime} shows the AIME 2024 score throughout OPD training. The score converges remarkably fast, with the student capturing nearly all of its performance gain within the first 50 steps and remaining stable thereafter, justifying our choice of 150 steps as a sufficient training budget. Figure~\ref{fig:dyn_sft} shows the effect of SFT checkpoint quality on the final model. All three curves improve consistently with more SFT steps, and both OPD variants provide a large, stable gain on top of the SFT baseline at every checkpoint. The relative ordering of the three methods remains consistent across all SFT budgets, indicating that \methodterm is robust to the choice of SFT training length.

\section{Extended Discussion}
\label{app:discussion}

We discuss how Lightning OPD relates to two superficially similar paradigms, offline RL and offline knowledge distillation, and clarify why neither subsumes our approach.

\paragraph{Relation to Offline RL.}
\label{remark:offline_rl}
Lightning OPD resembles offline RL in that both optimize over a fixed dataset, but the resemblance is superficial and a direct application of offline RL techniques to OPD would fail for reasons that offline RL methods are not designed to address.
The central challenge of offline RL is OOD action overestimation arising from sparse reward signals, which offline RL methods address through conservatism mechanisms such as value pessimism or policy constraint.
Neither problem exists in Lightning OPD, where the teacher supplies dense per-token log-probability supervision for all token sequences, leaving no OOD region and no sparse reward to cause high-variance estimation.
Conservatism is therefore neither necessary nor applicable.
The real obstacle to offline OPD is teacher inconsistency. When $\sigma_\Delta > 0$, the irreducible bias $G\sigma_\Delta$ is a structural property of the gradient field itself, not an estimation artifact from limited data coverage, and no importance sampling correction or conservatism mechanism can remove it.
Beyond the challenge, the two paradigms also differ in the nature of their solutions. Offline RL fixed points are data-limited, shaped by the coverage of the behavior policy, so better data coverage leads to better policies. Theorem~\ref{thm:fixed_point} and our empirical results suggest that Lightning OPD's performance ceiling is primarily capacity-limited, determined by model capacity $\Pi_\Theta$ relative to the teacher rather than by the rollout distribution. Improving the rollout distribution cannot push past the teacher's own capability, and the right lever is model capacity, not data coverage.
In summary, Lightning OPD is not a variant of offline RL applied to distillation. It is a principled offline approximation to an on-policy distillation objective, one whose unique challenge is teacher inconsistency rather than distributional coverage, and one that recovers standard OPD when teacher consistency is enforced.

\paragraph{Relation to Offline Knowledge Distillation.}
\label{remark:offline_kd}
Lightning OPD also differs fundamentally from offline (off-policy) knowledge distillation \cite{kim2016sequence,gu2024minillm}, despite the shared use of precomputed teacher signals.
Offline KD trains the student on teacher-generated sequences, so the student only receives supervision on trajectories the teacher would produce, never on its own mistakes. Lightning OPD instead collects rollouts from the student's own policy $\pi_{\text{ref}}$ and evaluates the teacher's per-token log-probabilities on these student-generated sequences. This means the teacher provides corrective signals on exactly the distribution the student will encounter during inference, which is the core advantage of on-policy methods over off-policy ones \cite{agarwal2024policy}.
The distinction is preserved in Lightning OPD even though teacher log-probabilities are precomputed. Empirically, the OPD stage provides substantial gains over the SFT baseline across all benchmarks, confirming that on-policy supervision on student rollouts extracts significantly more from the teacher than offline KD on teacher-generated data alone. This is consistent with Theorem~\ref{thm:fixed_point}: Lightning OPD's performance is primarily capacity-limited, depending on how well $\pi_\theta$ can approximate $\pi_T$, rather than being restricted to the support of teacher-generated data.

\paragraph{Relation to Rang et al.~\cite{rang2025revealing}.}
\label{remark:rang2025}
Rang et al.~\cite{rang2025revealing} also precompute teacher signals over student-generated responses and train without a live teacher server. While the high-level motivation of offline on-policy distillation is shared, the two approaches differ in a fundamental way: the coupling between SFT and distillation. Rang et al.\ treat SFT and knowledge distillation as independent sequential stages: SFT is performed on independently curated data, and the distillation teacher is a separate model with no constraint linking the two stages. No analysis is provided for when or why the offline approximation is reliable. A central finding of Lightning OPD is that the two stages must be considered holistically. We identify teacher consistency as an important design principle, requiring that the SFT data be generated by the same teacher used for OPD, and show that violating it introduces a gradient bias that degrades both offline and online OPD (Theorems~\ref{thm:consistency_gap}--\ref{thm:consistency_online}). This bias is a structural property of the gradient field itself, not an artifact of the offline approximation, and no amount of data or training can remove it. Enforcing teacher consistency is sufficient for offline OPD to share the same fixed point as online OPD when the teacher is representable (Theorems~\ref{thm:gap},~\ref{thm:fixed_point},~\ref{thm:implicit_reg}), providing formal guarantees that are absent in prior work. Beyond the design principle, the two approaches also differ in the distillation paradigm. Rang et al.\ formulate distillation as supervised learning with a composite cross-entropy and KL loss using soft teacher logits, whereas Lightning OPD formulates it as policy gradient optimization where the per-token advantage $\log \pi_T - \log \pi_\theta$ drives the gradient update. The policy gradient formulation enables direct integration with standard OPD and RLVR post-training infrastructure, and is what makes the theoretical analysis (gradient discrepancy bounds, shared fixed points, implicit regularization) tractable.

\paragraph{Evaluation Templates.}
We use the following prompt templates for evaluation. For mathematical reasoning benchmarks, the prompt follows the Qwen3 chat format:

\begin{verbatim}
<|im_start|>user
Question: {problem}
Please reason step by step, and put your final answer within \boxed{}.
<|im_end|>
<|im_start|>assistant
\end{verbatim}

For code generation benchmarks, the prompt includes a system message and a task description following LiveCodeBench conventions:

\begin{verbatim}
<|im_start|>system
You are a helpful and harmless assistant. You are Qwen developed by Alibaba.
You should think step-by-step.
<|im_end|>
<|im_start|>user
You will be given a question (problem specification) and will generate a correct
Python program that matches the specification and passes all tests.

Question: {question}

Read the inputs from stdin solve the problem and write the answer to stdout
(do not directly test on the sample inputs). Enclose your code within delimiters
as follows. Ensure that when the python program runs, it reads the inputs, runs
the algorithm and writes output to STDOUT.
# YOUR CODE HERE

<|im_end|>
<|im_start|>assistant
\end{verbatim}

\section{Limitations}
\label{app:limitations}

Our experiments focus on mathematical reasoning and code generation, both of which benefit from well-defined verifiable evaluation metrics. Extending Lightning OPD to broader post-training tasks such as multi-turn agent interactions, tool use, and open-ended instruction following remains an open direction, as these settings introduce challenges including multi-turn distribution shift and the difficulty of precomputing meaningful teacher signals over long interaction trajectories. Additionally, teacher consistency requires that SFT training data be generated by the same teacher used for OPD; when adopting a new teacher, this necessitates regenerating the SFT dataset, which can be resource-intensive for large teacher models and partially offsets the training-time savings, though it remains a one-time cost amortized over multiple experiments.

\end{document}